\documentclass{article} 
\usepackage{iclr2027_conference,times}

\IfFileExists{iclr/sections/00_abstract.tex}
  {\def\paperroot{iclr/}}{\def\paperroot{}}
\makeatletter
\edef\input@path{{\paperroot styles/}}
\makeatother

\usepackage{algorithm}
\usepackage{algorithmic}

\usepackage{amsmath,amsfonts,bm}

\def\eqref#1{equation~\ref{#1}}

\def\1{\bm{1}}

\DeclareMathAlphabet{\mathsfit}{\encodingdefault}{\sfdefault}{m}{sl}
\SetMathAlphabet{\mathsfit}{bold}{\encodingdefault}{\sfdefault}{bx}{n}

\newcommand{\R}{\mathbb{R}}

\newcommand{\softmax}{\mathrm{softmax}}

\usepackage{hyperref}
\usepackage{url}

\usepackage[T1]{fontenc}
\usepackage{microtype}
\usepackage{graphicx}
\usepackage{amsmath,amssymb,amsthm,mathtools,bm}
\usepackage{booktabs,tabularx,array,multirow}
\usepackage{longtable}
\usepackage{enumitem}
\usepackage{needspace}
\usepackage{tikz}
\usetikzlibrary{arrows.meta,positioning,fit}
\hypersetup{
  hidelinks,
  pdftitle={Change the Product, Keep the Parameters: Associative Algebra Layers for Transformers},
  pdfsubject={Associative low-rank algebra products for Transformer layers}
}
\usepackage[nameinlink,noabbrev]{cleveref}

\theoremstyle{plain}
\newtheorem{theorem}{Theorem}[section]
\newtheorem{proposition}[theorem]{Proposition}

\theoremstyle{definition}

\theoremstyle{remark}

\newcommand{\F}{\mathbb F}
\newcommand{\cQ}{\mathcal Q}

\newcommand{\rad}{\operatorname{rad}}

\iclrfinalcopy

\title{Change the Product, Keep the Parameters: Associative Algebra Layers for Transformers}

\author{
Ilya Koziev\thanks{DAIMLD, Moscow, Russia. \texttt{inkoziev@gmail.com}}
\And
Ivan Oseledets\thanks{Artificial Intelligence Research Institute, Moscow, Russia}
}

\begin{document}

\maketitle
\fancyhead[L]{Paper under review}

\begin{abstract}
Fast matrix multiplication algorithms keep the product fixed and search for a cheaper way to evaluate it. We instead ask whether a Transformer's learned projections can use a different, cheaper product altogether. Building on an associative-algebra construction that replaces ordinary matrix multiplication with a sparser interaction table over the same weight blocks, we construct a family with quadratic arithmetic in the matrix dimension when the physical block size remains fixed, and derive finite-shape constraints for GPU execution. The construction is provably optimal for its bilinear rank by the Alder--Strassen bound and can be realized as row-typed rectangular projections compatible with causal masking and KV-cached decoding. We provide an empirical test of this approach by training two approximately 110M-parameter decoder-only Transformer LMs from the same recipe and 12.3B-token budget, differing only in their feed-forward layer: one uses ordinary dense matrix multiplication and the other uses the associative-algebra product. Across four prompt domains, the algebraic model achieves a 6.2--7.8\% increase in end-to-end generation throughput, while obtaining lower scores on all three reported downstream metrics. We treat these results as a feasibility and trainability check for the proposed approach at small scale, leaving further investigation to future work.
\end{abstract}

\section{Introduction}

A Transformer repeatedly applies learned linear maps of the form
\(Y=XW\).  After the feature dimensions are divided into groups, this
operation becomes a sum of ordinary rectangular block GEMMs.  The row--column
rule decides which activation block is multiplied by which weight block; the
learned entries of \(W\) decide the values of those interactions.  Most work on
fast matrix multiplication treats the row--column rule as fixed and searches
for a cheaper evaluation algorithm.

Strassen first separated the product from its algorithm: the exact product of
two \(2\times2\) block matrices can be evaluated with seven block
multiplications rather than the conventional eight~\citep{strassen1969}.
Algebraic complexity theory and recent automated searches pursue the same
direction~\citep{burgisser1997,fawzi2022,dupont2026omega}.  The target is still
ordinary matrix multiplication; the algorithm is allowed to change.

We ask the reverse question.  A hidden representation and its surrounding
layers are learned jointly, so must their intermediate product be ordinary
matrix multiplication?  What is a natural product that is weaker---and
cheaper---but still structured enough to use throughout a network?  Our
proposal keeps the learned weight blocks and changes the rule by which
activation and weight blocks interact.

Consider the smallest case, \(q=2\).  We first use square block arrays to
display the multiplication rule; later, each row of the same rule becomes a
rectangular Transformer projection.  Write
\begin{equation}
X=\begin{bmatrix}A&U\\V&D\end{bmatrix},\qquad
W=\begin{bmatrix}W_a&W_u\\W_v&W_d\end{bmatrix}.
\label{eq:two-block-inputs}
\end{equation}
All symbols in this display denote compatible matrix blocks, and every
juxtaposition below is one ordinary block GEMM.  We call the four positions in
each array \emph{block slots}.  Ordinary block multiplication gives
\begin{equation}
XW=\begin{bmatrix}
AW_a+UW_v & AW_u+UW_d\\
VW_a+DW_v & VW_u+DW_d
\end{bmatrix},
\label{eq:dense-two}
\end{equation}
and therefore uses eight block GEMMs.  One idea is simply to remove the two
terms that close the off-diagonal cycle:
\begin{equation}
X\star_2 W=\begin{bmatrix}
AW_a & AW_u+UW_d\\
VW_a+DW_v & DW_d
\end{bmatrix}.
\label{eq:six-product}
\end{equation}
Here ``remove \(UW_v\)'' has a literal implementation-level meaning: that GEMM
is absent from the fixed rule for every input.  The blocks \(U\) and \(W_v\)
remain; in particular, \(W_v\) is still learned and appears in \(DW_v\).
Likewise, \(W_u\) remains in \(AW_u\).  Thus all four weight blocks in
\eqref{eq:two-block-inputs} are independent learned parameters, while the new
rule uses six block GEMMs instead of eight.  The square display lists two row
maps at once.  In the Transformer realization, positions are assigned one of
two row types: a type-1 token supplies the feature groups \((A,U)\) and
evaluates the first output row, while a type-2 token supplies \((V,D)\) and
evaluates the second.  Each position therefore uses one row of the table.

Why remove exactly these two terms? Four matching-slot products would be
cheaper but provide only group-wise scaling. We instead seek a rule that
retains cross-group interactions and composes across layers.
For a fixed weight $W$, let $\Phi_W(X):=X\star_2 W$.
The rule in~\eqref{eq:six-product} satisfies
\begin{equation}
\Phi_{W_2}\!\left(\Phi_{W_1}(X)\right)
  =\Phi_{W_1\star_2 W_2}(X).
\label{eq:association-intro}
\end{equation}
\Needspace{8\baselineskip}
Thus two compatible linear maps defined by the rule compose into a map of the
same form.  The block identity
\(\smash{E=\left[\begin{smallmatrix}I&0\\0&I\end{smallmatrix}\right]}\)
satisfies \(E\star_2 W=W\) and \(X\star_2 E=X\).  This is a structural
statement about the parameterization: every learned block can change an output
for a suitable algebraic input.  The Transformer realization exposes the full
bank jointly across its row types.

A bilinear multiplication with these two properties is called an
\emph{associative unital algebra}.  Here the algebra is the fixed table used by
the layer: activations supply its first argument, learned weights its second,
and \(Y=\mu_{\mathcal A}(X,W)\).  We write \(\cQ_2\) for the four-slot law in
\eqref{eq:six-product}.

The pattern generalizes to a directed graph: diagonal slots are vertices,
retained off-diagonal slots are edges, and edge--edge products vanish.
\Cref{sec:algebras} gives the construction and proves associativity.

Can the retained rule be evaluated with fewer than six products?  Its
\emph{bilinear rank} is the smallest number of scalar multiplications in an
exact bilinear algorithm.  With compatible matrix blocks, the linear forms in
each rank-one term become block-linear combinations and each multiplication
becomes one block GEMM; actual time also depends on the rectangular shapes.
The Alder--Strassen theorem lower-bounds this rank for finite-dimensional
associative unital algebras~\citep{alder1981,blaser2000}.  For a graph table with
\(m\) slots and \(v\) vertices, its specialized bound is \(R\ge2m-v\), as
derived in \cref{sec:algebras}.  The present law has \(m=4\) and \(v=2\),
giving \(R(\cQ_2)\ge2\cdot4-2=6\).  The displayed evaluation uses six products
and is therefore optimal for this multiplication table.

The graph construction is a scalable example of the broader algebraic
design. Its size can grow with the layer width. With $q$ feature groups,
the arithmetic fraction is $(2q-1)/q^2$; holding the physical GEMM block
size fixed while increasing $q$ gives quadratic arithmetic for square
operands. The practical question is which group sizes preserve efficient
GPU execution. \Cref{sec:scaling} derives this scaling and its finite-size
constraints, and \cref{sec:kernel-benchmarks} measures representative
Transformer projection shapes.

We lift the slots to rectangular activation and weight blocks and use the
resulting row actions as position-wise Transformer projections. Training
evaluates all row types in parallel; autoregressive decoding evaluates only
the new position's row while retaining the standard KV cache.
\Cref{sec:layers,sec:autoregression} give the construction and costs.

The construction raises two empirical questions: can the restricted layers
be trained from scratch, and do their lower arithmetic counts improve
execution time? We test the recursive $\cQ_2^{\otimes2}$ law in the
feed-forward blocks of an approximately 110M-parameter language model,
against a parameter-matched dense baseline, after 12.3B training tokens each.
Generation throughput improves by 6.2--7.8\% across four prompt domains.
Scores on GSM8K, IFEval, and MBPP are lower for the algebraic model.
These results connect the algebraic savings to an observed
quality--throughput trade-off; the quality of larger algebraic constructions remains to be measured.

\section{Products as Architectural Choices}
\label{sec:background}

\subsection{A Fixed Table, Learned Weights}

Let $\mathcal A$ be a vector space with $m$ named slots. At the scalar level,
a bilinear multiplication table takes the form
\begin{equation}
[\mu_{\mathcal A}(x,w)]_k=\sum_{i,j}c_{ij}^{k}x_iw_j.
\label{eq:algebra-table}
\end{equation}
The coefficients $c_{ij}^{k}$ specify which activation slot interacts with
which weight slot and where the result is accumulated. They are fixed by
the architecture; $w$ is learned. Setting a coefficient to zero removes a
product from the rule. With compatible matrix blocks, that product is a GEMM.
Ordinary $q\times q$ matrix multiplication is one such table with $m=q^2$.

For $\Phi_w(x)=\mu_{\mathcal A}(x,w)$, associativity gives
\begin{equation}
\Phi_{w_2}\circ\Phi_{w_1}
=\Phi_{\mu_{\mathcal A}(w_1,w_2)}.
\label{eq:general-associativity}
\end{equation}
Thus compatible linear maps compose within the same family. A two-sided
identity $1_{\mathcal A}$ gives $\Phi_{1_{\mathcal A}}(x)=x$ and
$\Phi_w(1_{\mathcal A})=w$: every weight slot can affect an output.
For Transformer projections, a token supplies one row of the table and the
complete weight bank is exposed jointly across row types
(\cref{sec:layers}). Autoregressive causality additionally requires each
projection to act independently on token rows (\cref{sec:autoregression}).

\subsection{How Many Products Are Necessary?}

A rank-$r$ bilinear algorithm writes
\begin{equation}
\mu(x,w)=\sum_{\ell=1}^{r}
\gamma_\ell\alpha_\ell(x)\beta_\ell(w),
\label{eq:bilinear-decomp}
\end{equation}
where $\alpha_\ell,\beta_\ell$ are linear forms and $\gamma_\ell$ are output
vectors. The least $r$ is the bilinear rank $R(\mu)$; additions and fixed
linear combinations are counted separately~\citep{burgisser1997}.
For compatible block shapes, each scalar multiplication lifts to one GEMM.
The graph construction below uses individual blocks, so it also supports
unequal rectangular shapes without adding incompatible blocks.

For a finite-dimensional associative unital algebra, Alder--Strassen gives
\begin{equation}
R(\mathcal A)\ge 2\dim\mathcal A-t(\mathcal A),
\label{eq:alder-strassen}
\end{equation}
where $t(\mathcal A)$ counts its maximal two-sided
ideals~\citep{alder1981,blaser2000}. A two-sided ideal is a subspace stable
under multiplication from either side; maximal means maximal among proper
such subspaces. For our graph products, $t$ will equal the number of vertices.

The bound applies to a specified multiplication law. Coordinatewise
multiplication $[\mu(x,w)]_i=x_iw_i$ has rank $m$ and retains only within-slot
interactions. We seek a low-rank law that also couples feature groups.
Likewise, the conventional matrix-product counts $8$ and $64$ refer to its
block formula; they are not exact ranks, since already $R(M_2)=7$.

\section{Graph Products with Exact Cost}
\label{sec:algebras}

\subsection{The Graph Specifies the Product}

Let $G=(V,E)$ be a finite directed graph without self-loops. Give each
vertex a basis element $e_i$ and each directed edge a basis element $a_{ij}$.
Vertices represent diagonal feature slots; edges represent off-diagonal
interaction slots. Define $\cQ_G$ by the nonzero basis products
\begin{equation}
e_i e_i=e_i,\qquad e_i a_{ij}=a_{ij},\qquad a_{ij}e_j=a_{ij}.
\label{eq:graph-basis-products}
\end{equation}
All other basis products vanish. In coordinates, the complete rule is
\begin{align}
[\mu_{\cQ_G}(x,w)]_i&=x_iw_i,\label{eq:graph-product-vertex}\\
[\mu_{\cQ_G}(x,w)]_{ij}&=x_iw_{ij}+x_{ij}w_j.
\label{eq:graph-product-edge}
\end{align}
Each vertex contributes one product and each edge contributes two. The
two-vertex complete graph recovers~\eqref{eq:six-product}: its missing terms
multiply two edge slots.

\begin{proposition}[Structure]
The algebra $\cQ_G$ is associative and unital, with dimension $|V|+|E|$
and identity $\sum_i e_i$. The span $J$ of the edge elements is a two-sided
ideal satisfying $J^2=0$ and $\cQ_G/J\cong\F^{|V|}$.
\label{prop:structure}
\end{proposition}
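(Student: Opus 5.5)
The cleanest route is to identify $\cQ_G$ with a subalgebra of a known associative algebra, so that associativity is inherited rather than checked term by term. Take the $n\times n$ matrix algebra $M_n(\F)$ with $n=|V|$ and matrix units $E_{kl}$. Consider instead the path-algebra model: form the quiver algebra $\F G$, whose basis consists of all paths (vertices as length-zero paths, edges as length-one paths, and longer paths), with concatenation as the product. This algebra is associative by associativity of concatenation. Let $I$ be the span of paths of length $\ge2$. It is a two-sided ideal, since concatenating a path with any path does not shorten it. Then $\F G/I$ has basis given by the images of $e_i$ and $a_{ij}$. In the quotient, the products are $e_ie_i=e_i$, $e_ie_k=0$ for $i\ne k$, $e_ia_{ij}=a_{ij}=a_{ij}e_j$, and every other vertex--edge product is $0$. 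Edge--edge products have length $2$ and hence vanish. These are exactly \eqref{eq:graph-basis-products}, so $\cQ_G\cong\F G/I$ is associative. One caveat: if the quiver algebra would be infinite-dimensional because of cycles, nothing changes, since the quotient by $I$ is still finite-dimensional with the stated basis. A self-contained fallback is to verify $(b_1b_2)b_3=b_1(b_2b_3)$ on basis triples. Any triple containing two edges gives $0$ on both sides. The remaining cases are three vertices, and one edge with two vertices in one of three positions. Each reduces to checking that the indices match, for example $(e_ia_{ij})e_k=\delta_{jk}a_{ij}=e_i(a_{ij}e_k)$.

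Dimension $|V|+|E|$ is immediate from the basis. For the unit $1=\sum_k e_k$, compute $1\cdot e_i=e_i=e_i\cdot1$, $1\cdot a_{ij}=e_ia_{ij}=a_{ij}$ and $a_{ij}\cdot1=a_{ij}e_j=a_{ij}$. Extend by bilinearity. Equivalently, in coordinates \eqref{eq:graph-product-vertex}--\eqref{eq:graph-product-edge} with $x_i=1$ and $x_{ij}=0$ return $w$.

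For $J=\operatorname{span}\{a_{ij}\}$, the coordinate formulas show that if $w\in J$ (all $w_i=0$), then $[\mu(x,w)]_i=0$. So $xw\in J$, and symmetrically $wx\in J$; hence $J$ is a two-sided ideal. $J^2=0$ because all edge--edge products vanish. Finally, the map $\pi:\cQ_G\to\F^{|V|}$ given by $x\mapsto(x_i)_{i\in V}$ is surjective with kernel $J$. It is multiplicative because \eqref{eq:graph-product-vertex} says the vertex coordinates multiply componentwise, independently of the edge coordinates. It also sends $1$ to $(1,\dots,1)$. So $\cQ_G/J\cong\F^{|V|}$.

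The only step requiring care is associativity. The quotient-of-path-algebra identification makes it painless, provided one checks that length-$\ge2$ paths form an ideal and that the quotient's multiplication table matches exactly. I expect that matching, rather than any conceptual difficulty, to be the main point to write out carefully.
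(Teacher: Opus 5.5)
Your proof is correct, but your main argument for associativity takes a different route from the paper's. The paper's proof is exactly your fallback. It checks basis triples: two or more edges give $0$ under both bracketings; with one edge, both bracketings keep it exactly when the flanking vertices match its source and target; with no edges, both require equal vertices. It then reads off the ideal property, $J^2=0$, and the quotient directly from the basis products.

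You instead identify $\cQ_G$ with $\F G/I$, the path algebra modulo the span of paths of length at least two. Associativity is then inherited from concatenation, and no case analysis is needed. This also gives structure: $\cQ_G$ becomes the radical-square-zero quotient of a path algebra, with $J=R/R^2$ the image of the arrow ideal $R$. That makes the fact used next in the rank theorem standard, namely that $J$ is the Jacobson radical and there are exactly $|V|$ maximal ideals.

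The price is that you must state the concatenation convention explicitly: $pq\neq0$ only when the target of $p$ is the source of $q$. The opposite convention yields $\cQ_G^{\mathrm{op}}$, the algebra of the reversed graph, not the defining table verbatim. The paper's check is shorter and fully self-contained.

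For the last claim, your unital homomorphism $\pi\colon x\mapsto(x_i)_{i\in V}$ with kernel $J$ is cleaner than the paper's remark that removing $J$ ``leaves the independent products $e_ie_i=e_i$''. It exhibits $\cQ_G/J\cong\F^{|V|}$ as an isomorphism of algebras.

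Finally, drop the opening sentence about $M_n(\F)$. The naive assignment $a_{ij}\mapsto E_{ij}$ is not multiplicative, since $E_{ij}E_{jk}=E_{ik}\neq0$. If you want a matrix model, $e_i\mapsto E_{ii}+E_{n+i,n+i}$ and $a_{ij}\mapsto E_{i,n+j}$ embed $\cQ_G$ into $M_{2n}(\F)$.
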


The proof is given in \cref{app:graph-proofs}.

\begin{theorem}[Exact Bilinear Rank]
For every field $\F$,
\begin{equation}
R(\cQ_G)=|V|+2|E|.
\label{eq:graph-rank}
\end{equation}
\label{thm:graph-rank}
\end{theorem}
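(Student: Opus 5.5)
The plan is to prove matching upper and lower bounds.

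\emph{Upper bound.} The coordinate formulas \eqref{eq:graph-product-vertex}--\eqref{eq:graph-product-edge} already give an explicit bilinear algorithm. Each vertex $i$ contributes the single product $x_iw_i$, accumulated into output $e_i$. Each edge $(i,j)$ contributes the two products $x_iw_{ij}$ and $x_{ij}w_j$, accumulated into output $a_{ij}$. Every such product is a rank-one term $\gamma_\ell\alpha_\ell(x)\beta_\ell(w)$ in the sense of \eqref{eq:bilinear-decomp}, with coordinate linear forms and a basis output vector. This gives $R(\cQ_G)\le |V|+2|E|$ over any field, since all coefficients are $0$ or $1$.

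\emph{Lower bound.} I would invoke the Alder--Strassen bound \eqref{eq:alder-strassen}, which holds over every field. By \cref{prop:structure}, $\cQ_G$ is associative and unital of dimension $|V|+|E|$. It therefore suffices to show $t(\cQ_G)=|V|$, which yields $R\ge 2(|V|+|E|)-|V|=|V|+2|E|$.

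\emph{Counting maximal ideals.} Again by \cref{prop:structure}, $J$ is a nilpotent two-sided ideal ($J^2=0$) and $\cQ_G/J\cong\F^{|V|}$. I would argue in three steps.
\begin{enumerate}
\item Every maximal two-sided ideal $M$ contains $J$. If not, then $M+J=\cQ_G$, so $1=m+j$ with $m\in M$ and $j\in J$. Then $m=1-j$ is invertible with inverse $1+j$, because $j^2=0$. This forces $M=\cQ_G$, a contradiction.
\item Consequently, maximal ideals of $\cQ_G$ correspond bijectively to maximal ideals of $\F^{|V|}$.
\item The maximal ideals of $\F^{|V|}$ are exactly the $|V|$ coordinate hyperplanes $\{y:y_i=0\}$. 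This is the standard fact that ideals of a finite product of fields are products of ideals of the factors.
\end{enumerate}
Together these give $t=|V|$.

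The main obstacle is that this counting step must be done with care, including the observation that $J$ lies in the radical. The rest is direct substitution. I would also double-check the convention that $t(\mathcal A)$ counts maximal two-sided ideals, rather than something like simple components of the semisimple quotient. For $\F^{|V|}$ both conventions give $|V|$, so the bound is unaffected either way.
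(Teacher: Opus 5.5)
Your proposal is correct and follows the paper's proof. The coordinate formulas give the upper bound, and Alder--Strassen with $t(\cQ_G)=|V|$ gives the matching lower bound, since every maximal ideal contains the square-zero ideal $J$ and so maximal ideals correspond to those of $\F^{|V|}$. Your unit argument, $(1-j)(1+j)=1$ forcing $J\subseteq M$, is a more explicit version of the paper's one-line justification of that step.
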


The proof is given in \cref{app:graph-proofs}.

\subsection{A Family with Adjustable Size}

For the complete directed graph on $q$ vertices, write $\cQ_q$ and
$A\star_q B:=\mu_{\cQ_q}(A,B)$. Arrange vertex slots on the diagonal
and edge slots off the diagonal. The rule becomes
\begin{align}
(A\star_q B)_{ii}&=A_{ii}B_{ii},\label{eq:q-product-diag}\\
(A\star_q B)_{ij}&=A_{ii}B_{ij}+A_{ij}B_{jj},\quad i\ne j.
\label{eq:q-product-off}
\end{align}
All $q^2$ slots are retained, and
\begin{equation}
\dim\cQ_q=q^2,\qquad R(\cQ_q)=2q^2-q.
\label{eq:q-rank}
\end{equation}
The two-group example is the smallest instance. Larger $q$ gives a
systematic arithmetic--interaction trade-off, studied in \cref{sec:scaling}.

Tensor products give another way to build larger algebras. The pretraining
pilot uses the tensor square of the two-group law, with 36 explicit products
at dimension sixteen. Its exact rank lies in $[28,36]$; the direct
four-group law attains 28. These are distinct interaction rules.
\Cref{app:tensor-square} gives the comparison and recursive row counts.

\section{Transformer Projections}
\label{sec:layers}

Let $X\in\F^{M\times K}$ and $W\in\F^{K\times N}$. Partition the
input and output features into $q$ groups, with widths $s_j$ and $r_j$,
so $K=\sum_j s_j$, $N=\sum_j r_j$, and
$W_{jk}\in\F^{s_j\times r_k}$. The block index is $q\times q$;
the physical blocks may be rectangular and unequal.

A token $x=(x_1,\ldots,x_q)$ at position $p$ selects a row
$i=\tau(p)$ by a fixed deterministic schedule. For example,
$\tau(p)=1+((p-1)\bmod q)$ cycles through the row types.
Its projection is
\begin{align}
y_i&=x_iW_{ii},\label{eq:typed-diag}\\
y_j&=x_iW_{ij}+x_jW_{jj},\quad j\ne i.
\label{eq:typed-off}
\end{align}
Compatible projections compose using the same block product; a diagonal
bank of identity maps gives the identity projection.

\begin{proposition}[Parameters and Rectangular Cost]
The bank stores $KN$ independent scalars. With
$D_{\rm diag}=\sum_j s_jr_j$, a type-$i$ row costs
\begin{equation}
C_i=D_{\rm diag}+s_i(N-r_i)
\label{eq:unequal-row-cost}
\end{equation}
MACs. If $M_i$ rows have type $i$, the batch and training costs are
\begin{align}
C_{\rm proj}^{\rm fwd}
 &=MD_{\rm diag}+\sum_i M_i s_i(N-r_i),
 \label{eq:rectangular-batch-cost}\\
C_{\rm proj}^{\rm train}&=3C_{\rm proj}^{\rm fwd}.
\label{eq:three-pass-projection-cost}
\end{align}
For equal feature groups, every row has dense-relative cost
\begin{equation}
\rho_q=\frac{2q-1}{q^2}.
\label{eq:rho}
\end{equation}
Across all types, the weight bank acts injectively.
\label{prop:parameters}
\end{proposition}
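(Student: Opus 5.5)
We prove the statement by direct counting from the typed projection rule \eqref{eq:typed-diag}--\eqref{eq:typed-off}, then settle injectivity by exhibiting suitable inputs.

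\textbf{Parameter count.} The bank consists of all $q^2$ blocks $W_{jk}\in\F^{s_j\times r_k}$. There are no tying constraints, so the number of free scalars is $\sum_{j,k}s_jr_k=(\sum_j s_j)(\sum_k r_k)=KN$.

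\textbf{Per-row cost.} A type-$i$ row evaluates the following GEMMs:
\begin{itemize}
\item the diagonal products $x_jW_{jj}$ for every $j$, costing $s_jr_j$ MACs each;
\item the off-diagonal products $x_iW_{ij}$ for $j\ne i$, costing $s_ir_j$ MACs each.
\end{itemize}
Summing the second group gives $s_i\sum_{j\ne i}r_j=s_i(N-r_i)$, which yields \eqref{eq:unequal-row-cost}. Summing $C_i$ over the $M_i$ rows of type $i$, and using $\sum_iM_i=M$, gives \eqref{eq:rectangular-batch-cost}.

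\textbf{Training cost.} The factor $3$ in \eqref{eq:three-pass-projection-cost} follows from the fact that the backward pass uses the same sparsity pattern twice. First, the input gradient $\partial x$ is obtained by transposing exactly the same block GEMMs, one GEMM per forward GEMM with the same MAC count. Second, the weight gradient $\partial W_{jk}$ accumulates one outer product per forward occurrence of $W_{jk}$, again with the same MAC count. Hence training costs three times the forward pass.

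\textbf{Equal groups.} Set $s_j=s=K/q$ and $r_j=r=N/q$. Then $D_{\rm diag}=qsr$ and $s(N-r)=(q-1)sr$, so $C_i=(2q-1)sr$ for every $i$. Dividing by the dense cost $KN=q^2sr$ gives $\rho_q$.

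\textbf{Injectivity.} We read ``acts injectively across all types'' as follows: the linear map from the bank $W$ to the collection of all typed projection functions is injective. Equivalently, if every type-$i$ projection vanishes identically for every $i$, then $W=0$. To show this, take a type-$i$ row whose only nonzero input group is $x_j=v$; we recover each block by choosing $i$ and $j$ appropriately.
\begin{itemize}
\item For a diagonal block $W_{jj}$: with $i\ne j$, only the term $x_jW_{jj}$ survives, so output group $j$ equals $vW_{jj}$. If $q=1$, simply take $i=j$. Either way, $vW_{jj}=0$ for all $v$ forces $W_{jj}=0$.
\item For an off-diagonal block $W_{ij}$ with $j\ne i$: take $x_i=v$ and all other groups zero. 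Output group $j$ is then $vW_{ij}+0$, so $W_{ij}=0$.
\end{itemize}
This mirrors the unit argument: the identity of $\cQ_q$ from \cref{prop:structure} shows $\Phi_W(1)=W$, and the typed rows jointly reconstruct all of $W$.

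The main obstacle is less mathematical than definitional. We must pin down what ``injectively'' and the three-pass training count mean for the statement to hold exactly. In particular, the count assumes the weight-gradient GEMMs reuse the forward sparsity with no extra products, and that reductions over rows are not counted as MACs.
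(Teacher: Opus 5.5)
Your proof is correct and follows the paper's route. The row and batch costs are direct block counts. The factor of three comes from the vector--Jacobian products reusing exactly the forward block support, as in the paper's appendix. Injectivity is the paper's ``immediate from the block support'' argument, which you make explicit with test inputs.

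One small simplification: a type-$j$ row with only $x_j=v$ nonzero already returns $vW_{jj}$ in group $j$ and $vW_{jk}$ in every group $k\neq j$. That single test input recovers the whole $j$-th block row, so the case split for diagonal blocks is unnecessary.
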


The gradient formulas and the three-pass count are given in \cref{app:vjp}; the parameter-coverage argument is immediate from the block support.

A token's local weight gradient reaches all diagonal blocks and its source
row of off-diagonal blocks. Other token types update the remaining rows.
These are interactions between feature groups within a token; attention
couples token positions. More groups reduce active computation while
restricting each position's linear map. The resulting map can still have
full rank when its diagonal blocks are invertible.

\paragraph{Projection Shapes.}
A gated FFN uses two $D\times F$ weights and one $F\times D$ weight.
For attention, let $H_q=n_qd_h$ and $H_{kv}=n_{kv}d_h$ denote aggregate
query and key/value widths. The Q/K/V/O weight shapes are
$D\times H_q$, $D\times H_{kv}$, $D\times H_{kv}$, and $H_q\times D$.
The same row law applies to each shape. Gates, pointwise activations,
normalization, residual connections, and biases retain their usual
definitions. The pretraining pilot replaces only the FFN projections.

\section{Scaling and Practical Implementation}
\label{sec:scaling}

The algebra size determines how much arithmetic is removed; the physical
block sizes determine how efficiently the remaining products run.
These are separate choices. A fixed small algebra gives a constant-factor
reduction. Allowing its size to grow changes the asymptotic cost.

\subsection{Growing the Algebra}

Consider square operands of side $n=qb$, using the complete graph algebra
on $q$ vertices and ordinary $b\times b$ products inside each slot.
The block-valued algebra is
$\mathcal A_{q,b}=\cQ_q\otimes M_b(\F)$, of dimension $n^2$.

\begin{proposition}[Quadratic Arithmetic at Fixed Block Size]
The direct block algorithm uses
\begin{equation}
C(n,q)=(2q^2-q)(n/q)^3
=\left(\frac2q-\frac1{q^2}\right)n^3
\label{eq:scaling-square}
\end{equation}
MACs. For any fixed positive block size $b$, choosing $q=n/b$ gives
\begin{equation}
C(n,n/b)=2bn^2-b^2n=\Theta(n^2).
\label{eq:scaling-quadratic}
\end{equation}
Moreover, $R(\mathcal A_{q,b})\ge2n^2-q$.
\label{prop:quadratic-scaling}
\end{proposition}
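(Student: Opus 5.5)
The cost formula is a direct count, and the rank bound comes from Alder--Strassen. For the count, the direct block algorithm evaluates \eqref{eq:q-product-diag}--\eqref{eq:q-product-off} with each slot a $b\times b$ block, $b=n/q$. There are $q$ diagonal slots with one block product each and $q(q-1)$ off-diagonal slots with two each, giving $q+2q(q-1)=2q^2-q$ block products; this agrees with $R(\cQ_q)$ in \eqref{eq:q-rank}. Each $b\times b$ by $b\times b$ product costs $b^3=(n/q)^3$ MACs with the schoolbook method, which gives \eqref{eq:scaling-square} after simplifying $(2q^2-q)/q^3=2/q-1/q^2$. Substituting $q=n/b$ (assuming $b\mid n$) gives $2(n/b)^2b^3-(n/b)b^3=2bn^2-b^2n$. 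Since $b$ is fixed, this lies between $bn^2$ and $2bn^2$ for $n\ge b$, so it is $\Theta(n^2)$.

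For the rank bound, I will apply \eqref{eq:alder-strassen} to $\mathcal A_{q,b}=\cQ_q\otimes M_b(\F)$. First I check that it is associative and unital: a tensor product of associative unital algebras is again one, with identity $1_{\cQ_q}\otimes I_b$, using \cref{prop:structure}. Its dimension is $q^2b^2=n^2$. What remains is to count its maximal two-sided ideals and show $t(\mathcal A_{q,b})=q$.

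This counting step is the main obstacle. Let $J$ be the edge ideal from \cref{prop:structure}. Then $J\otimes M_b$ is a two-sided ideal with $(J\otimes M_b)^2=J^2\otimes M_b=0$, so it is nilpotent. Every maximal two-sided ideal $\mathfrak m$ contains every nilpotent ideal $N$: otherwise $\mathfrak m+N=\mathcal A$, so $1=m+x$ with $x\in N$ nilpotent, and then $m=1-x$ is invertible, which is impossible for a proper ideal. Hence maximal ideals of $\mathcal A_{q,b}$ correspond to maximal ideals of the quotient $\mathcal A_{q,b}/(J\otimes M_b)\cong\F^{q}\otimes M_b\cong M_b(\F)^{q}$. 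This quotient is a product of $q$ simple algebras, so its maximal two-sided ideals are exactly the $q$ ideals that kill one factor. Therefore $t=q$, and Alder--Strassen gives $R(\mathcal A_{q,b})\ge 2n^2-q$.

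I would also note that this is consistent with the earlier specialization $t(\cQ_G)=|V|$ in \cref{thm:graph-rank}, since tensoring with the simple algebra $M_b(\F)$ preserves the number of simple factors modulo the radical.
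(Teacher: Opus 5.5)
Your proof is correct and follows the paper's argument. You use the same count of $2q^2-q$ block products at $b^3$ MACs each, and the same Alder--Strassen step, getting $t(\mathcal A_{q,b})=q$ from the nilpotent ideal $J\otimes M_b(\F)$ and the quotient $M_b(\F)^q$; you also spell out two details the paper leaves implicit, namely why every maximal ideal contains a nilpotent ideal and why a product of $q$ simple algebras has exactly $q$ maximal ideals.
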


The rank bound follows from Alder--Strassen; see \cref{app:scaling-proof}.

The quadratic order therefore requires no scalar-size leaves: $b$ can remain
large enough for a conventional GEMM. The multiplication law changes with
$q$. At fixed $q$, \eqref{eq:scaling-square} remains cubic; for
$q=\Theta(n^\alpha)$, it is $\Theta(n^{3-\alpha})$.
Increasing $q$ also restricts each row's feature interactions and changes
how often off-diagonal parameters are used. Computational scaling alone
does not establish a quality-preserving scaling law.

\subsection{Finite Rectangular Shapes}

For a projection $X\in\F^{M\times K}$ with $K\times N$ weights and equal
groups, the exact active MAC count is
\begin{equation}
C_{\rm proj}(M,K,N;q)=\frac{2q-1}{q^2}MKN.
\label{eq:scaling-rectangular}
\end{equation}
The type-specific terms have dimensions approximately
$(M/q)\times(K/q)\times(N/q)$. If efficient leaf GEMMs require
at least $m_0,k_0,n_0$ along these axes, a useful screening condition is
\begin{equation}
q\ \le\ \min\{M/m_0,\ K/k_0,\ N/n_0\}.
\label{eq:hardware-q}
\end{equation}
The admissible $q$ must also divide the feature widths for equal groups.
The criterion provides an initial size filter. For example,
requiring all three axes to be at least 128 permits $q\le16$ at
$M=K=N=2048$, and $q\le32$ at side 4096.
Decode requires a separate matrix--vector implementation because its row
dimension is small.
At non-aligned sizes, the executed tiles include masked rows and columns;
their arithmetic can substantially exceed the active count in
\eqref{eq:scaling-rectangular}.

Let $P_d,P_a(q)$ be the effective dense and algebraic compute rates,
$\eta_q=P_a(q)/P_d$, and let $\delta_q$ be additional overhead divided by
dense time. In a compute-dominated model, speedup requires
\begin{equation}
\rho_q/\eta_q+\delta_q<1.
\label{eq:crossover}
\end{equation}
Smaller blocks can reduce $\eta_q$ enough to offset the arithmetic saving.
Ignoring overhead, $q>2/\eta_q$ is sufficient; along the fixed-$b$
family this reads $n>2b/\eta_q$. The effective rate must be measured
at the resulting leaf shapes.
Memory traffic remains a constraint because a batch exposing all row types
uses the full weight bank.

\subsection{Kernels}

We extend the endpoint-product schedules of our existing block-algebra
kernels to arbitrary $q$ and rectangular feature widths. One forward kernel
accumulates both endpoint products before writing the output. A second
schedule computes the diagonal maps over all rows, then the type-dependent
maps; this improves reuse when each type has few rows. A dedicated decode
kernel performs vector reductions. The activation and weight gradients have
separate kernels with one owner per output element.

These kernels consume the canonical weight bank. Tile selection is calibrated
on each shape and evaluated with separate timing trials. The next section
reports where the arithmetic reduction produces a measured benefit.
Each $q$ defines a different architecture. A deployed model keeps $q$ fixed
across training, prefill, and decoding; kernel tiles and schedules can adapt
to the workload.

\section{Training and Inference}
\label{sec:autoregression}

Training evaluates all token rows in parallel; autoregressive decoding
evaluates one new row per sequence. Both regimes use the same weights and
the same position-type schedule.

\subsection{One Row Action in Both Regimes}

For a support set $\mathcal B(i,j)$ determined by the algebra, a row of
type $i$ computes
\begin{equation}
y_j=\sum_{k\in\mathcal B(i,j)}x_kW_{kj}.
\label{eq:recursive-row-action}
\end{equation}
For the complete graph construction, $\mathcal B(i,j)=\{i,j\}$ as a set.
The recursive pilot uses the endpoint sets in \cref{app:tensor-square}.
Algorithm~\ref{alg:projection} applies to both.

\begin{algorithm}[t]
\caption{Batched or one-token algebraic projection.}
\label{alg:projection}
\begin{algorithmic}[1]
\REQUIRE Rows $X$, positions $p_b$, weights $W$, support $\mathcal B$
\STATE $Y\gets0$
\FOR{each row type $i$}
  \STATE $I_i\gets\{b:\tau(p_b)=i\}$
  \FOR{each output group $j$ and $k\in\mathcal B(i,j)$}
    \STATE $Y_j[I_i]\gets Y_j[I_i]+X_k[I_i]W_{kj}$
  \ENDFOR
\ENDFOR
\STATE \textbf{return} $Y$
\end{algorithmic}
\end{algorithm}

\subsection{Training}

Each term in Algorithm~\ref{alg:projection} is a rectangular GEMM.
The two backward maps use the same support, so matrix-product MACs for
forward plus backward equal three forward evaluations
(\cref{prop:parameters}; \cref{app:vjp} gives the direct-law gradients).
For a gated MLP with widths $D,F$, the three projections have shapes
$D\times F,D\times F,F\times D$. On $B$ token rows, their training
cost falls from $9BDF$ to $9\rho BDF$. Activations, gates, normalization,
and residual operations are additional.

\begin{proposition}[Causality]
A fixed position-type schedule and row-local algebraic projections,
combined with the usual triangular attention mask, preserve
autoregressive causality.
\label{prop:causality}
\end{proposition}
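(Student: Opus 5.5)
The argument is an induction over layers. The inductive claim is that the hidden state at position $p$ in every layer depends only on the input tokens at positions $1,\dots,p$. First I will formalize the architecture as a composition of three kinds of maps acting on a length-$T$ sequence of rows $(h_1,\dots,h_T)$.
\begin{itemize}
\item[(i)] Position-wise maps $h_p\mapsto f_p(h_p)$. These cover normalization, gates, pointwise activations, biases and the algebraic projections.
\item[(ii)] Residual additions of two sequences that each already satisfy the property.
\item[(iii)] Masked attention.
\end{itemize}
A sequence map $F$ is \emph{causal} if $F(h)_p$ depends only on $h_1,\dots,h_p$. The proposition then reduces to two facts: each building block is causal, and causal maps are closed under composition and under position-wise sums and products.

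For the algebraic projections, I will invoke the row action \eqref{eq:recursive-row-action} and Algorithm~\ref{alg:projection}. The output at position $p$ is
\[
y_j(p)=\sum_{k\in\mathcal B(\tau(p),j)}x_k(p)W_{kj}.
\]
The type $\tau(p)$ is a fixed deterministic function of the index $p$ alone, for example $1+((p-1)\bmod q)$. It does not depend on token contents, on other positions, or on the sequence length $T$. Likewise, the support $\mathcal B$ and the weights $W$ are fixed. So $y(p)$ is a function $f_p$ of $x(p)$ only, which is a type-(i) map and trivially causal. The one point to check explicitly is that the schedule is not chosen in a way that looks ahead, say by balancing types over the whole batch or by depending on $T$. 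Under the stated fixed-schedule hypothesis, this is exactly what is excluded.

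Masked attention is the standard case. The output at $p$ is $\sum_{s\le p}\softmax_s(q_p^\top k_s/\sqrt{d})\,v_s$. The queries, keys and values are produced by causal (in fact row-local) projections of the current layer input, so this output depends only on rows $s\le p$. Closure under composition follows from a short induction: if $G$ is causal and $F$ is causal, then $(F\circ G)(h)_p$ depends on $G(h)_{1..p}$, and each of those entries depends on $h_{1..p}$. Closure under sums and products follows the same way. Applying this through the stack, including the final output head (which is position-wise), shows that the logits at $p$ depend only on tokens $\le p$.

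Finally, I will note the decoding consequence. A new token at position $p$ is processed with the same type $\tau(p)$. The cached keys and values for $s<p$ were computed from prefixes that are unchanged, so recomputing them would give identical values, and parallel training agrees with KV-cached decoding. I expect the main obstacle to be precision rather than difficulty: stating clearly that the row type is determined by the absolute position index and not by anything sequence-dependent, and making sure the cache equality argument uses the same $\tau$ in both regimes.
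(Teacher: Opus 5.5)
Your proposal is correct and follows essentially the same argument as the paper. The paper likewise uses row-locality of the typed projection, with the type fixed by position; the triangular mask admitting only $u\le p$; and induction over layers. You additionally spell out the closure-under-composition step and the KV-cache consistency, which the paper leaves implicit.
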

The proof is the standard induction over layers; see \cref{app:causality-proof}.

\subsection{One-Token Inference}

At position $p$, the decoder appends its key/value to the cache, attends to
allowed positions, and applies the output and FFN projections. The FFN uses
Algorithm~\ref{alg:projection} with row type $\tau(p)$; an absolute-position
schedule keeps this type consistent across training, prefill, and decoding.

For $N$ cached positions, per-head width $d_h$, and $n_{kv}$ key/value heads,
\begin{equation}
K^{\rm cache},V^{\rm cache}\in\R^{N\times n_{kv}\times d_h}.
\label{eq:icml-cache}
\end{equation}
Storage is $2NH_{kv}$ scalars, with $H_{kv}=n_{kv}d_h$.
Let $H_q=n_qd_h$ and let $g=2$ for a gated MLP ($g=1$ for a two-map MLP).
With algebraic FFN projections and dense attention, one decoder block costs
\begin{equation}
C_{\rm dec}=2DH_q+2DH_{kv}+\rho(g+1)DF+2NH_q
\label{eq:icml-decode-total}
\end{equation}
MACs in projections and attention contractions. The dense baseline sets
$\rho=1$. In the tested configuration, $H_q=H_{kv}=D$, $F=2D$, and
$\rho=9/16$. Its decoder-block projection fraction is
\[
\frac{4D^2+(9/16)\,6D^2}{10D^2}=\frac{59}{80}.
\]
Attention contractions and the language-model head further dilute the FFN
saving. These counts exclude memory traffic and elementwise operations.

\paragraph{Implementation and Extensions.}
The smaller products can use off-the-shelf GEMMs; specialized kernels can
fuse their products and accumulations. The pilot uses CUDA-graph decode
paths for both models. Graph capture reduces launch overhead, while
single-token computation can still be limited by memory bandwidth.
\Cref{sec:attention-details} develops the separate extension to algebraic
attention scores, including its cache and normalization formulas; the
experiment uses dense attention.

\section{Experiments}
\label{sec:experiments}
\subsection{Kernel Scaling on Transformer Shapes}
\label{sec:kernel-benchmarks}

We measure both FFN projection directions at the public dimensions of
Qwen3-1.7B, 4B, 8B, a Qwen3-30B-A3B expert, and a DeepSeek-V3
expert~\citep{qwen3report,deepseekv3}. The corresponding $(D,F)$ pairs are
$(2048,6144)$, $(2560,9728)$, $(4096,12288)$, $(2048,768)$, and
$(7168,2048)$. An expert's $M$ counts tokens already routed to that expert.
We vary $M\in\{1,32,512,2048\}$ and $q\in\{4,8,16,32,64\}$.

The study uses synthetic BF16 tensors on one H200 NVL, FP32 accumulation,
CUDA Graph replay for both implementations, and preallocated outputs.
The primary comparison is ordinary $XW$ against the algebraic projection,
without activation or FFN fusion. Calibration selects the faster available
cuBLAS/cuBLASLt path for the dense shape and the schedule and tile for the
algebraic kernel; separate alternating trials provide the reported medians.
Forward and both gradient maps are checked against independent formulas.

Warm replay and a separate cache-flushed measurement characterize different
reuse regimes. The reported speedups are kernel-level projection/FFN speedups,
not end-to-end Transformer speedups; whole-model throughput, MoE dispatch, and
inter-device communication require separate measurements. Fully realizing the
acceleration potential will also require fused kernels; we leave such kernel
fusion to future work.
The additional gated-block comparison merges gate/up on both sides and
uses the same activation kernel; its results are reported in the appendix.
\begin{table*}[t]
\centering
\caption{Pure expansion projections, BF16 on H200 NVL. The selected $q_\ast$ has the lowest warm Q-kernel median in the tested grid: $\{4,8,16,32,64\}$ for the first five shapes and $\{32,64,128,256\}$ for Qwen3-32B. Block dimensions $(m,k,n)=(M/q_\ast,K/q_\ast,N/q_\ast)$ show where subdivision stops, before GPU-tile padding. Times are microseconds; speedups are kernel speedups (dense kernel / algebraic kernel), reported for warm / cache-flushed runs. $\dagger$ marks the largest tested $q$.}
\label{tab:kernel-products}
\small
\begin{tabular*}{\textwidth}{@{\extracolsep{\fill}}lrrcrrr@{}}
\toprule
Model shape & $M$ & $q_\ast$ & Block $(m,k,n)$ & Dense $\mu$s & Q $\mu$s & Speedup\\
\midrule
Qwen3-1.7B & 2048 & $32$ & $(64,64,192)$ & 75.7 & 24.9 & $3.04\times$ / $2.84\times$\\
Qwen3-4B & 2048 & $32$ & $(64,80,304)$ & 160.6 & 51.1 & $3.14\times$ / $2.94\times$\\
Qwen3-8B & 2048 & $32$ & $(64,128,384)$ & 320.7 & 63.6 & $5.04\times$ / $4.52\times$\\
Qwen3-30B-A3B expert & 2048 & $16$ & $(128,128,48)$ & 13.2 & 8.4 & $1.57\times$ / $1.30\times$\\
DeepSeek-V3 expert & 2048 & $16$ & $(128,448,128)$ & 82.3 & 28.3 & $2.91\times$ / $2.47\times$\\
Qwen3-32B & 2048 & $32$ & $(64,160,800)$ & 849.2 & 167.8 & $5.06\times$ / $4.86\times$\\
Qwen3-32B & 8192 & $128$ & $(64,40,200)$ & 3337.4 & 400.7 & $8.33\times$ / $8.60\times$\\
Qwen3-32B & 32768 & $256^\dagger$ & $(128,20,100)$ & 13458.6 & 1169.4 & $11.51\times$ / $11.70\times$\\
\bottomrule
\end{tabular*}
\end{table*}
The extended Qwen3-32B projection sweep and fixed-block square-product
measurements are reported in \cref{app:large-products}. The main-table
results already show the dependence of useful subdivision on workload size
and physical block shape.

\subsection{Small Pretraining Study}

We train two approximately 110M-parameter decoder-only language models from
scratch on 12.3B tokens each, with one run per model. Both use 12 pre-norm
blocks, width $D=768$, 32 attention heads, learned positional embeddings,
context length 1536, and the GPT-2 tokenizer with vocabulary size 50,257.
The bias-free SwiGLU FFN has width $F=1536$ and three weight matrices
of shapes $D\times F,D\times F,F\times D$. The dense model uses ordinary
projections; the algebraic model uses the recursive
$\cQ_2^{\otimes2}$ action~\eqref{eq:recursive-row-action}.
Attention, embeddings, and the language-model head are dense in both.

Both runs use two NVIDIA H100 GPUs with a per-GPU batch size of 40, AdamW with peak learning rate $3\times10^{-4}$, 500 warmup steps, and cosine decay to 10\% of the peak.
Prompt and response fields from the same public instruction-data mixture are concatenated for training.
Full hyperparameters and the reported corpus manifest are in \cref{app:empirical-details,app:training_dataset}.
The algebraic FFN's projection MAC fraction is $9/16$.

\paragraph{Generation Throughput.}

Both models use CUDA-graph single-token decode paths. We measure end-to-end
generation throughput including tokenization and sampling across four prompt
domains (\cref{tab:throughput}). The algebraic model has higher throughput
in every domain, by 6.2--7.8\%. The unweighted mean of the four throughput
ratios is $1.070$.

\begin{table}[t]
\centering
\caption{Measured generation throughput. Gain is the relative increase
over the dense model within each domain.}
\label{tab:throughput}
\small
\begin{tabular}{@{}lrrr@{}}
\toprule
Domain & Dense tok/s & Algebraic tok/s & Gain\\
\midrule
Code & 987.8 & 1054.5 & 6.8\%\\
Math & 991.4 & 1060.6 & 7.0\%\\
QA & 958.6 & 1033.1 & 7.8\%\\
WMT ru--en & 985.9 & 1047.1 & 6.2\%\\
\bottomrule
\end{tabular}
\end{table}

Generated lengths differ between models: for example, the mean code response is 195.5 tokens for dense and 182.4 for algebraic. The measurements therefore describe the realized generation workloads. Prompt and response lengths are reported in
\cref{tab:generation-speed-baseline,tab:generation-speed-algebraic}.

\paragraph{Downstream Quality.}

We use the evaluation harness~\citep{lmevalharness} to evaluate
GSM8K~\citep{cobbe2021gsm8k}, IFEval~\citep{zhou2023ifeval}, and
MBPP~\citep{austin2021mbpp} on the same accelerated inference paths.
GSM8K uses five-shot prompting; IFEval and MBPP use zero-shot evaluation.
\Cref{tab:downstream} reports a concise comparison; all six metric values
and available standard errors are in \cref{tab:full-downstream}.

\begin{table}[t]
\centering
\caption{Downstream scores in percent. Higher is better. The appendix
reports the complete metric set and evaluation standard errors.}
\label{tab:downstream}
\small
\begin{tabular}{@{}llrr@{}}
\toprule
Task & Metric & Dense & Algebraic\\
\midrule
GSM8K & EM, flexible & 3.18 & 2.20\\
IFEval & Instance, loose & 34.5 & 31.1\\
MBPP & pass@1 & 6.60 & 3.80\\
\bottomrule
\end{tabular}
\end{table}

The algebraic model scores lower on all three reported metrics.
The GSM8K gap is 0.98 points, loose instance-level IFEval gap is 3.4 points; the MBPP pass@1 gap is 2.8 points.
The pilot establishes that the recursive layer can be trained and used
for faster generation under this recipe, with a measurable quality cost.
\section{Related Work}
\label{sec:related}

\paragraph{Fast matrix multiplication.}
Strassen showed that classical matrix multiplication is arithmetically
suboptimal~\citep{strassen1969}; tensor rank and algebraic complexity provide
the framework for subsequent bounds~\citep{alder1981,burgisser1997,blaser2000}.
Recent work improves the matrix-multiplication exponent while retaining the
same target product~\citep{dupont2026omega}. We instead change the product.

\paragraph{Alternative algebras and structured layers.}
AlgebraNets and algebraic neural networks replace or generalize ordinary
multiplication through alternative or finite-dimensional algebra laws
~\citep{hoffmann2020algebranets,parada2021algebraic,niemczynowicz2026tensorial}.
Hypercomplex multiplication, Tensor Train, Monarch, and Block Tensor-Train
methods impose structure primarily through parameter factorization or
reduction~\citep{zhang2021phm,oseledets2011,novikov2015tensorizing,dao2022monarch,qiu2024btt}.
Our setting keeps exactly $d_{\rm in}d_{\rm out}$ learned scalars and changes
only the multiplication table, selected by certified scalar bilinear rank.

\paragraph{Other efficient Transformers.}
Matmul-free models remove dense matrix multiplications through alternative representations~\citep{zhu2024matmulfree}; MoE models use content-dependent expert selection~\citep{fedus2022switch}. Our method uses fixed type-dependent sparsity with the full weight bank.
\citet{dao2022flashattention} show that memory traffic also affects wall-clock attention time; our runtime experiment keeps attention dense.

\section{Discussion and Limitations}
\label{sec:limitations}

The experiment tests one recursive FFN law, one model scale, and one training
run per model. Larger algebras, algebraic attention, larger models, and
repeated seeds remain untested. Projection benchmarks use synthetic tensors;
evaluation standard errors quantify task-example uncertainty, not
training-run variation.

The throughput gain is smaller than the FFN arithmetic reduction because
attention, the LM head, sampling, tokenization, and memory traffic also
contribute. Generation lengths differ, so fixed-length repeated full-model
timings are needed to isolate the layer effect.

Equal parameter storage permits different function classes, while
associativity guarantees composition rather than quality preservation.
Thus the results do not establish an advantage at equal quality or training
time.

\section{Conclusion}

Associative algebras make the multiplication law an architectural choice.
A growing graph family retains the dense-sized weight bank and reduces
square-product arithmetic to quadratic order at a fixed physical block size.

The kernel measurements identify this arithmetic--latency trade-off on
representative Transformer shapes. The pretraining pilot separately shows
a generation-throughput gain accompanied by lower downstream scores.

\subsection*{AI use statement}

We used generative AI tools for spellchecking and grammar checking, for
\LaTeX-related formatting, and for drafting parts of the paper. All
AI-assisted text and formatting were reviewed by the authors, who take full
responsibility for the final content, claims, and presentation of this work.

\label{sec:main-end}

\bibliographystyle{iclr2027_conference}
\bibliography{\paperroot refs}

\clearpage

\appendix
\section{Direct Associativity Check}

Write an element of $\cQ_q$ as $(d,n)$ with $d\in\F^q$ diagonal and $n$ off-diagonal.  The product is
\begin{equation}
(d,n)\star_q(d',n')=(dd',\,dn'+nd'),
\label{eq:semidirect-product}
\end{equation}
where left and right diagonal multiplication act on the source and target of each arrow; explicitly, $(dn')_{ij}=d_i n'_{ij}$ and $(nd')_{ij}=n_{ij}d'_j$.  Since the left and right actions commute, both parenthesizations have diagonal part $dd'd''$.  Let
\[
\Xi=dd'n''+dn'd''+nd'd''.
\]
Expanding either parenthesization gives
\begin{align*}
((d,n)\star_q(d',n'))\star_q(d'',n'')&=(dd'd'',\Xi),\\
(d,n)\star_q((d',n')\star_q(d'',n''))&=(dd'd'',\Xi),
\end{align*}
so the law is associative.  The element $(\bm 1,0)$ is the unit.  This calculation complements the basis-element argument in \cref{prop:structure}.

\section{Vector--Jacobian Products}
\label{app:vjp}

For one token of type $i$, let $\bar y_j$ denote the gradient of the loss with respect to output group $j$.  The typed projection~\eqref{eq:typed-diag}--\eqref{eq:typed-off} gives
\begin{align}
\bar x_i &\mathrel{+}= \bar y_iW_{ii}^{\top}
+\sum_{j\neq i}\bar y_jW_{ij}^{\top},\label{eq:vjp-xi}\\
\bar x_j &\mathrel{+}= \bar y_jW_{jj}^{\top},\qquad j\neq i,\label{eq:vjp-xj}\\
\bar W_{ii} &\mathrel{+}=x_i^{\top}\bar y_i,\label{eq:vjp-wii}\\
\bar W_{ij} &\mathrel{+}=x_i^{\top}\bar y_j,\qquad j\neq i,\label{eq:vjp-wij}\\
\bar W_{jj} &\mathrel{+}=x_j^{\top}\bar y_j,\qquad j\neq i.
\label{eq:vjp-wjj}
\end{align}
Summing these expressions over token types yields gradients for every block.  The displayed vector--Jacobian products preserve exactly the block support of the forward multiplication table.  Hence $dX$ and $dW$ each have the same matrix-product MAC count as the forward projection, which proves~\eqref{eq:three-pass-projection-cost}.

\section{The Tensor-Square Candidate}
\label{app:tensor-square}

Let $J=\rad(\cQ_2)$.  Although $\cQ_4$ and $\cQ_2^{\otimes2}$ are both sixteen-dimensional and have semisimple quotient $\F^4$, they are not isomorphic.  In $\cQ_4$, the radical satisfies $\rad(\cQ_4)^2=0$.  In the tensor square,
\[
\rad(\cQ_2^{\otimes2})
=J\otimes\cQ_2+\cQ_2\otimes J,
\]
since the right-hand side is nilpotent and the quotient is
$(\cQ_2/J)\otimes(\cQ_2/J)\cong\F^4$.  Its square contains $J\otimes J\neq0$ because $J\cQ_2=\cQ_2J=J$.  The tensor-square law therefore permits some two-step radical interactions that strict $\cQ_4$ removes.  The 36-versus-28 comparison concerns two non-isomorphic products on the same number of coordinates.

\subsection{Rectangular Counts for Recursive Products}

For $\mathcal A_L=\cQ_2^{\otimes L}$, types and feature-group indices are
bit strings in $\{0,1\}^L$. A token of type $\alpha$ uses intermediate
groups $\beta$ whose coordinates come from the corresponding endpoints
$\alpha,\delta$. With input widths $s_\beta$ and output widths $r_\delta$,
\begin{equation}
C_\alpha^{(L)}=
\sum_\delta
\sum_{\substack{\beta:\,\beta_k\in\{\alpha_k,\delta_k\}\\k=1,\ldots,L}}
s_\beta r_\delta.
\label{eq:recursive-unequal-cost}
\end{equation}
For uniform types, a block $(\beta,\delta)$ occurs with probability
$2^{-h(\beta,\delta)}$, where $h$ is the Hamming distance. Hence
\begin{equation}
\bar C^{(L)}
=\sum_{\beta,\delta}2^{-h(\beta,\delta)}s_\beta r_\delta.
\label{eq:recursive-uniform-cost}
\end{equation}
Equal group widths give ratio $(3/4)^L$ relative to a dense projection.
At $L=2$, the full product uses $4\cdot1+8\cdot2+4\cdot4=36$ block
products. More generally the recursive algorithm uses $6^L$ products,
establishing $R(\cQ_2^{\otimes L})\le6^L$.

\section{Experimental Details}
\label{app:empirical-details}

\subsection{Training Configuration}

Both models are trained from scratch using the same instruction-data
mixture described in \autoref{app:training_dataset}, tokenizer, initialization scheme, and optimization recipe.
Each model is trained for 100,000 optimizer steps on two NVIDIA H100 GPUs,
with a per-GPU batch size of 40 and sequence length 1536. This corresponds
to 12.288B token slots, or approximately 12.3B consumed training tokens.

\begin{table}[t]
\centering
\caption{Training configuration shared by both models.}
\label{tab:hyperparams}
\small
\begin{tabular}{@{}lr@{}}
\toprule
Quantity & Value\\
\midrule
Decoder blocks & 12\\
Model width & 768\\
FFN width & 1536\\
Attention heads & 32\\
Context length & 1536\\
Vocabulary size & 50,257\\
Parameters & approximately 110M\\
Training tokens & 12.3B\\
Hardware & 2 NVIDIA H100 GPUs\\
Microbatch size & 40 per GPU\\
Gradient accumulation & 1\\
Maximum optimizer steps & 100,000\\
Warmup steps & 500\\
Peak learning rate & $3\times10^{-4}$\\
Final learning-rate fraction & 0.1\\
Weight decay & 0.1\\
Adam $(\beta_1,\beta_2)$ & $(0.9,0.95)$\\
Adam $\epsilon$ & $10^{-8}$\\
Gradient clipping norm & 1.0\\
\bottomrule
\end{tabular}
\end{table}

Both models contain 110,631,168 trainable parameters.

The software versions are CUDA 13.0, PyTorch 2.13.0, and
\texttt{transformers} 4.54.1. The tokenizer is GPT-2 BPE; positional
embeddings are learned. FFN projections are bias-free.

Both models use BF16 autocast for forward and backward computation, while
trainable parameters are stored in FP32. Evaluation loss curves for both
models are shown in \autoref{fig:eval_loss}.

\begin{figure}
    \centering
    \includegraphics[width=0.5\linewidth]{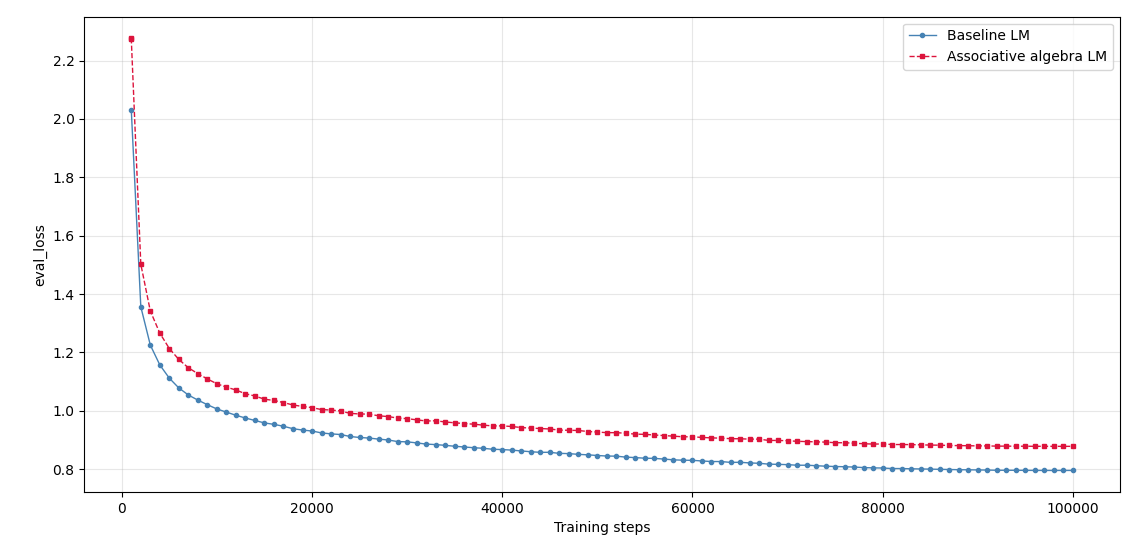}
    \caption{Evaluation loss curves for the baseline and associative algebra LMs.}
    \label{fig:eval_loss}
\end{figure}

\subsection{Generation Speed Measurement Protocol}

Generation speed was measured separately on four domains: \emph{code}
(code generation), \emph{math} (solving mathematical problems),
\emph{qa} (general questions), and \emph{wmt ru--en} (Russian-to-English
translation). Each domain contains 100 unique prompts.

Responses were generated with sampling using temperature $=0.8$,
top-$p=0.95$, and top-$k=50$, with a maximum of 256 generated tokens.
All generation was performed in BF16 precision. Measurements were
conducted four times for each model and domain. The first run was excluded
from the reported averages to reduce the effect of CUDA and GPU-cache
warm-up; the remaining three measurements were used for the reported means.

The margin of error for each mean was computed using the Student's
$t$-distribution with a 95\% confidence interval. The reported generation
speed includes the complete measured generation workload, including
tokenization and sampling.

\begin{table*}[t]
\centering
\caption{Generation speed measurements for the baseline model. Each domain
contains 100 unique prompts. The first of four measurement runs was
excluded as a warm-up; the three retained runs are shown in the
\emph{Measurements} column.}
\label{tab:generation-speed-baseline}
\small
\begin{tabular*}{\textwidth}{@{\extracolsep{\fill}}lrrrrr@{}}
\toprule
Domain &
Mean prompt &
Mean response &
Measurements &
Mean speed &
95\% CI margin\\
&
tokens &
tokens &
tok/sec &
tok/sec &
of error\\
\midrule
Code & 42.9 & 195.5 & 987.5, 988.2, 987.6 & 987.8 & 1.0\\
Math & 54.3 & 223.8 & 990.1, 993.7, 990.3 & 991.4 & 5.0\\
QA & 17.1 & 92.6 & 954.0, 962.8, 959.0 & 958.6 & 11.0\\
WMT ru--en & 263.8 & 190.0 & 988.0, 986.0, 983.7 & 985.9 & 5.2\\
\bottomrule
\end{tabular*}
\end{table*}

\begin{table*}[t]
\centering
\caption{Generation speed measurements for the associative algebra model.
Each domain contains 100 unique prompts. The first of four measurement
runs was excluded as a warm-up; the three retained runs are shown in the
\emph{Measurements} column.}
\label{tab:generation-speed-algebraic}
\small
\begin{tabular*}{\textwidth}{@{\extracolsep{\fill}}lrrrrr@{}}
\toprule
Domain &
Mean prompt &
Mean response &
Measurements &
Mean speed &
95\% CI margin\\
&
tokens &
tokens &
tok/sec &
tok/sec &
of error\\
\midrule
Code & 42.9 & 182.4 & 1055.5, 1053.7, 1054.2 & 1054.5 & 2.2\\
Math & 54.3 & 227.8 & 1058.9, 1061.9, 1060.9 & 1060.6 & 3.8\\
QA & 17.1 & 110.8 & 1031.9, 1034.6, 1032.6 & 1033.1 & 3.5\\
WMT ru--en & 263.8 & 204.9 & 1047.3, 1048.0, 1046.0 & 1047.1 & 2.6\\
\bottomrule
\end{tabular*}
\end{table*}

\begin{table}[t]
\centering
\caption{Associative algebra model generation speed relative to the
baseline model.}
\label{tab:generation-speedup}
\small
\begin{tabular}{@{}lr@{}}
\toprule
Domain & Relative speedup (\%)\\
\midrule
Code & 6.8\\
Math & 7.0\\
QA & 7.8\\
WMT ru--en & 6.2\\
\bottomrule
\end{tabular}
\end{table}

\subsection{Downstream Tasks Evaluation Protocol}

Downstream evaluation was performed using
\texttt{lm-eval-harness==0.4.9.1} on the generative tasks GSM8K, MBPP,
and IFEval. The maximum number of generated tokens was 512, the batch
size was 1, and generation was performed in BF16 precision. All other
generation parameters were left at their default values for the
\texttt{lm\_eval.simple\_evaluate()} call.

\subsection{Complete Downstream Results}

\Cref{tab:full-downstream} reports the complete set of supplied downstream
results on their original $[0,1]$ scale. The standard errors are those
reported by the evaluation harness; ``N/A'' indicates that no standard
error was supplied.

\begin{table}[t]
\centering
\caption{Complete downstream results on the original $[0,1]$ scale.
EM denotes exact match.}
\label{tab:full-downstream}
\small
\begin{tabular}{@{}llrr@{}}
\toprule
Task & Metric & Dense & Algebraic\\
\midrule
GSM8K &
EM, flexible-extract &
0.0318 $\pm$ 0.0048 &
0.0220 $\pm$ 0.0040\\
MBPP &
pass@1 &
0.0660 $\pm$ 0.0111 &
0.0380 $\pm$ 0.0086\\
IFEval &
Instruction-level loose accuracy &
0.3453 &
0.3106\\
\bottomrule
\end{tabular}
\end{table}

The underlying evaluation outputs were
0.0318423 and 0.0219864 for GSM8K,
0.066 and 0.038 for MBPP, and
0.345324 and 0.310552 for IFEval, respectively. The corresponding
reported standard errors were 0.004836 and 0.004039 for GSM8K and
0.011115 and 0.008559 for MBPP; no standard error was supplied for
IFEval.

\subsection{Reproducibility Status}

Both models were trained with the same model architecture, optimization
hyperparameters, data pipeline, and training budget, with the dense and
associative algebra models differing in the multiplication law used by
the FFN projections. The training run uses two NVIDIA H100 GPUs and
100,000 optimizer steps with a per-GPU batch size of 40, corresponding
to approximately 12.3B consumed token slots.

The available GPT-2 pipeline uses two recursive levels, with feature splits
$768\to384\to192$. For zero-based position $p$, its four-valued row type is
$\lfloor(p\bmod768)/192\rfloor$, corresponding to the two-bit index
$\alpha$. This schedule assigns consecutive intervals to each type.
The projection microbenchmarks use cyclic types to cover growing group
counts uniformly. The $9/16$ arithmetic fraction holds for the stated
tensor-square row action with equal groups, irrespective of the type
frequencies.

The generation measurements use four domains with 100 unique prompts
per domain, four measurement runs, and three retained runs after
warm-up exclusion. Margins of error are based on Student's $t$-distribution
at a 95\% confidence level. Downstream evaluation uses
\texttt{lm-eval-harness==0.4.9.1} with the settings specified above.

\section{Kernel Benchmark Details}
\label{app:kernel-benchmarks}
The primary benchmark isolates the product. The dense control selects the faster cuBLAS/cuBLASLt preference during calibration. The additional SwiGLU test applies identical gate/up merging and activation code to both methods. All tensors are synthetic BF16 with FP32 accumulation. Timings use H200 NVL, PyTorch 2.13.0+cu130, and Triton 3.7.1.
For each measured configuration, twelve held-out trials alternate the order of the two methods. Warm trials repeatedly replay the same graph. In the flushed protocol, a 512 MiB buffer is touched before each single replay, outside timing. The protocols also differ in sustained duty cycle; they characterize two reuse regimes. At $M=1$, warm replay repeats one row type. Full decoder timings require its actual sequence of types and cache accesses.
Small-shape verification passed 86 forward/backward checks, including non-power-of-two group counts, with maximum relative $L_2$ error 0.00208. Chosen forward tiles and sampled large-shape gradient entries are checked independently. Algebraic gradient kernels use a fixed $(32,64,32)$ tile with four warps; only forward schedules are calibrated. The complete FFN is compared against a PyTorch composition.
Concurrent compute processes are checked before and after each measurement. An interrupted preliminary screen is excluded. Performance counters were unavailable under the server permissions, so the report makes no counter-based bottleneck attribution. MoE measurements exclude dispatch and communication. DeepSeek dimensions are evaluated in BF16; its production FP8 implementation is outside this comparison.
\begin{table*}[t]
\centering
\caption{Complete SwiGLU forward at fixed $q=32$, BF16 on H200 NVL. Both implementations merge gate/up. Times are warm-replay medians in microseconds. The last column reports the separately measured cache-flushed speedup. These are isolated FFNs on public model shapes.}
\label{tab:kernel-ffn}
\small
\begin{tabular*}{\textwidth}{@{\extracolsep{\fill}}lrrrrrr@{}}
\toprule
Model shape & $D$ & $F$ & $M$ & Dense $\mu$s & Algebra $\mu$s & Speedup (warm / flushed)\\
\midrule
Qwen3-1.7B & 2048 & 6144 & 1 & 30.2 & 8.7 & $3.45\times$ / $2.46\times$\\
Qwen3-1.7B & 2048 & 6144 & 2048 & 273.7 & 107.4 & $2.55\times$ / $2.45\times$\\
Qwen3-4B & 2560 & 9728 & 1 & 49.1 & 10.4 & $4.74\times$ / $3.21\times$\\
Qwen3-4B & 2560 & 9728 & 2048 & 539.3 & 193.8 & $2.78\times$ / $2.68\times$\\
Qwen3-8B & 4096 & 12288 & 1 & 83.6 & 12.6 & $6.64\times$ / $4.12\times$\\
Qwen3-8B & 4096 & 12288 & 2048 & 1031.9 & 247.3 & $4.17\times$ / $3.92\times$\\
Qwen3-30B-A3B expert & 2048 & 768 & 1 & 11.5 & 6.7 & $1.71\times$ / $1.42\times$\\
Qwen3-30B-A3B expert & 2048 & 768 & 2048 & 40.3 & 23.2 & $1.74\times$ / $1.54\times$\\
DeepSeek-V3 expert & 7168 & 2048 & 1 & 33.4 & 8.5 & $3.94\times$ / $2.75\times$\\
DeepSeek-V3 expert & 7168 & 2048 & 2048 & 304.2 & 81.3 & $3.74\times$ / $3.40\times$\\
\bottomrule
\end{tabular*}
\end{table*}
\begin{table*}[t]
\centering
\caption{Forward plus activation and weight gradients, $M=2048$, $q=32$, warm replay. The dense library preference is calibrated separately for each of its three GEMMs. Times are microseconds.}
\label{tab:kernel-training}
\small
\begin{tabular}{llrrr}
\toprule
Model shape & Direction & Dense & Algebra & Speedup\\
\midrule
Qwen3-1.7B & up & 225.6 & 177.0 & $1.27\times$\\
Qwen3-1.7B & down & 223.8 & 127.0 & $1.76\times$\\
Qwen3-4B & up & 455.9 & 341.6 & $1.33\times$\\
Qwen3-4B & down & 465.9 & 280.2 & $1.66\times$\\
Qwen3-8B & up & 912.0 & 465.0 & $1.96\times$\\
Qwen3-8B & down & 954.0 & 372.6 & $2.56\times$\\
Qwen3-30B-A3B expert & up & 40.0 & 61.4 & $0.65\times$\\
Qwen3-30B-A3B expert & down & 38.5 & 67.6 & $0.57\times$\\
DeepSeek-V3 expert & up & 260.8 & 146.1 & $1.79\times$\\
DeepSeek-V3 expert & down & 275.9 & 205.0 & $1.35\times$\\
\bottomrule
\end{tabular}
\end{table*}
\begin{table*}[t]
\centering
\caption{Pure $D\to F$ projection: optimized dense GEMM versus the algebraic product at fixed $q=32$, BF16 on H200 NVL. Times are warm-replay medians in microseconds. The last column gives warm / cache-flushed speedup. No activation or FFN fusion is included.}
\label{tab:kernel-products-fixed}
\small
\begin{tabular*}{\textwidth}{@{\extracolsep{\fill}}lrrrrrr@{}}
\toprule
Model shape & $D$ & $F$ & $M$ & Dense $\mu$s & Algebra $\mu$s & Speedup (warm / flushed)\\
\midrule
Qwen3-1.7B & 2048 & 6144 & 1 & 9.5 & 3.8 & $2.51\times$ / $1.89\times$\\
Qwen3-1.7B & 2048 & 6144 & 2048 & 75.7 & 24.9 & $3.04\times$ / $2.84\times$\\
Qwen3-4B & 2560 & 9728 & 1 & 14.0 & 4.1 & $3.43\times$ / $2.72\times$\\
Qwen3-4B & 2560 & 9728 & 2048 & 160.6 & 51.1 & $3.14\times$ / $2.94\times$\\
Qwen3-8B & 4096 & 12288 & 1 & 30.7 & 4.4 & $7.07\times$ / $4.14\times$\\
Qwen3-8B & 4096 & 12288 & 2048 & 320.7 & 63.6 & $5.04\times$ / $4.52\times$\\
Qwen3-30B-A3B expert & 2048 & 768 & 1 & 6.5 & 3.3 & $2.00\times$ / $1.36\times$\\
Qwen3-30B-A3B expert & 2048 & 768 & 2048 & 13.4 & 10.5 & $1.28\times$ / $1.14\times$\\
DeepSeek-V3 expert & 7168 & 2048 & 1 & 10.9 & 3.8 & $2.87\times$ / $2.20\times$\\
DeepSeek-V3 expert & 7168 & 2048 & 2048 & 82.4 & 28.5 & $2.89\times$ / $2.51\times$\\
\bottomrule
\end{tabular*}
\end{table*}

\section{Larger Pure Products}
\label{app:large-products}
The extended experiment keeps the pure-product comparison and synthetic BF16 inputs. It tests both Qwen3-32B FFN projection directions at $M\in\{2048,8192,32768\}$ and $q\in\{32,64,128,256\}$, and square products with $n\in\{4096,8192,16384,32768\}$ and fixed block sizes $b\in\{128,256\}$. The public Qwen3-32B configuration gives hidden width 5120 and intermediate width 25600. No model weights are loaded.

Calibration tests eight tile configurations for each of the fused and two-stage Q schedules and selects the dense cuBLAS/cuBLASLt preference. Each calibration uses three short timing trials; twelve separate evaluation trials alternate the two methods. Warm trials use up to 40 graph replays, choosing a repeat count that targets at least 2\,ms for the faster operation when possible. Cache-flushed trials use one replay after touching 512\,MiB outside the timed interval. Large tensors are allocated once per shape. Only the forward product is measured in this extension.

Every candidate is checked against independent FP32 sums at at least 2048 sampled output entries, covering diagonal and off-diagonal terms and the full row range. The largest sampled relative $L_2$ error is 0.00196. Additional $q=128,256$ checks exercise nonzero type offsets and row counts not divisible by $q$. The larger-size grid contains 32 configurations, each measured under both cache protocols.

At fixed $b$, each operand and the output contain $n^2$ entries, while the number of active block products is $2q^2-q$. The hardware measurements test the finite-size behavior of this family. They do not establish a quality-preserving model scaling law; $q$ changes between members of the family.

Across the tested eightfold increase in $n$, the $b=128$ Q latency has endpoint log--log slope 2.01. This is a description of these four measured sizes, separate from the exact quadratic arithmetic bound in \cref{prop:quadratic-scaling}.

\begin{table*}[t]
\centering
\caption{Qwen3-32B projection shapes at four subdivision sizes. Block dimensions $(m,k,n)=(M/q,K/q,N/q)$ precede GPU-tile padding. Times are warm medians in milliseconds; bold Q times are the lowest among the four tested $q$ values for that shape. The last column gives warm / cache-flushed speedup.}
\label{tab:large-projections}
\small
\begin{tabular}{llrcrrr}
\toprule
$K\to N$ & $M$ & $q$ & Block $(m,k,n)$ & Dense ms & Q ms & Speedup\\
\midrule
$5120\to25600$ & 2048 & 32 & $(64,160,800)$ & 0.849 & \textbf{0.168} & $5.06\times$ / $4.86\times$\\
 &  & 64 & $(32,80,400)$ & 0.829 & 0.232 & $3.58\times$ / $3.75\times$\\
 &  & 128 & $(16,40,200)$ & 0.779 & 0.248 & $3.15\times$ / $3.25\times$\\
 &  & 256 & $(8,20,100)$ & 0.789 & 0.266 & $2.97\times$ / $3.19\times$\\
\addlinespace
$5120\to25600$ & 8192 & 32 & $(256,160,800)$ & 3.290 & 0.583 & $5.65\times$ / $5.88\times$\\
 &  & 64 & $(128,80,400)$ & 3.290 & 0.523 & $6.30\times$ / $6.68\times$\\
 &  & 128 & $(64,40,200)$ & 3.337 & \textbf{0.401} & $8.33\times$ / $8.60\times$\\
 &  & 256 & $(32,20,100)$ & 3.316 & 0.414 & $8.00\times$ / $8.10\times$\\
\addlinespace
$5120\to25600$ & 32768 & 32 & $(1024,160,800)$ & 13.446 & 2.335 & $5.76\times$ / $5.76\times$\\
 &  & 64 & $(512,80,400)$ & 13.336 & 2.055 & $6.49\times$ / $6.83\times$\\
 &  & 128 & $(256,40,200)$ & 13.596 & 1.558 & $8.73\times$ / $8.68\times$\\
 &  & 256 & $(128,20,100)$ & 13.459 & \textbf{1.169} & $11.51\times$ / $11.70\times$\\
\addlinespace
$25600\to5120$ & 2048 & 32 & $(64,800,160)$ & 0.810 & 0.181 & $4.49\times$ / $4.27\times$\\
 &  & 64 & $(32,400,80)$ & 0.731 & \textbf{0.147} & $4.97\times$ / $4.85\times$\\
 &  & 128 & $(16,200,40)$ & 0.813 & 0.174 & $4.66\times$ / $4.41\times$\\
 &  & 256 & $(8,100,20)$ & 0.778 & 0.243 & $3.20\times$ / $3.32\times$\\
\addlinespace
$25600\to5120$ & 8192 & 32 & $(256,800,160)$ & 3.339 & 0.576 & $5.79\times$ / $5.95\times$\\
 &  & 64 & $(128,400,80)$ & 3.354 & \textbf{0.396} & $8.48\times$ / $8.30\times$\\
 &  & 128 & $(64,200,40)$ & 3.312 & 0.411 & $8.06\times$ / $8.07\times$\\
 &  & 256 & $(32,100,20)$ & 3.294 & 0.552 & $5.96\times$ / $6.38\times$\\
\addlinespace
$25600\to5120$ & 32768 & 32 & $(1024,800,160)$ & 13.564 & 2.354 & $5.76\times$ / $5.72\times$\\
 &  & 64 & $(512,400,80)$ & 13.509 & \textbf{1.519} & $8.89\times$ / $8.92\times$\\
 &  & 128 & $(256,200,40)$ & 13.356 & 1.684 & $7.93\times$ / $7.97\times$\\
 &  & 256 & $(128,100,20)$ & 13.248 & 1.958 & $6.76\times$ / $6.84\times$\\
\addlinespace
\bottomrule
\end{tabular}
\end{table*}
\begin{table*}[t]
\centering
\caption{Fixed-block-size square products. Subdivision stops at $b\times b$ blocks; algebra size grows as $q=n/b$. Warm medians are milliseconds; bold Q times are the lower of the two tested block sizes for that $n$. The final column includes the separate cache-flushed measurement.}
\label{tab:large-square}
\small
\begin{tabular}{rrrrrr}
\toprule
$n$ & $b$ & $q$ & Dense ms & Q ms & Speedup (warm / flushed)\\
\midrule
4096 & 128 & 32 & 0.210 & \textbf{0.044} & $4.81\times$ / $4.47\times$\\
4096 & 256 & 16 & 0.205 & 0.058 & $3.56\times$ / $3.19\times$\\
8192 & 128 & 64 & 1.773 & \textbf{0.166} & $10.69\times$ / $10.63\times$\\
8192 & 256 & 32 & 1.729 & 0.225 & $7.70\times$ / $7.42\times$\\
16384 & 128 & 128 & 13.881 & \textbf{0.689} & $20.15\times$ / $19.95\times$\\
16384 & 256 & 64 & 13.999 & 0.991 & $14.13\times$ / $14.08\times$\\
32768 & 128 & 256 & 114.265 & \textbf{2.868} & $39.85\times$ / $38.85\times$\\
32768 & 256 & 128 & 117.518 & 5.344 & $21.99\times$ / $21.58\times$\\
\bottomrule
\end{tabular}
\end{table*}

\section{Attention Extension and Detailed Cost Accounting}
\label{sec:attention-details}

An autoregressive Transformer uses the same trained layer in two computational regimes.  Teacher-forced training evaluates all token rows and all allowed query--key pairs in parallel under the triangular mask.  Inference first processes the prompt in parallel and then contributes one new row per active sequence while reusing the KV cache.  The algebra, weights, and position types remain fixed across both regimes.

\begin{algorithm}[t]
\caption{Parallel training with algebraic attention.}
\label{alg:attention-training}
\begin{algorithmic}[1]
\REQUIRE Hidden rows $H_b$, absolute positions $p_b$, weight banks
\STATE $i_b\gets\tau(p_b)$ for every row $b$
\STATE Compute $Q,K,V$ with Algorithm~\ref{alg:projection}
\FOR{each allowed query--key pair $(b,u)$}
  \STATE Compute $s_{bu}$ using~\eqref{eq:attention-raw-score}--\eqref{eq:attention-score}
\ENDFOR
\STATE $P_{bu}\gets\softmax_{u:\,p_u\le p_b}(s_{bu})$
\STATE $O_b\gets\sum_{u:\,p_u\le p_b}P_{bu}V_u$
\STATE Apply the typed output and feed-forward projections
\STATE Compute next-token loss and backpropagate
\end{algorithmic}
\end{algorithm}

\begin{algorithm}[t]
\caption{Cached decoding with algebraic attention.}
\label{alg:attention-decoding}
\begin{algorithmic}[1]
\REQUIRE New row $h_p$, full cache $(K_u,V_u)_{u<p}$, weight banks
\STATE $i\gets\tau(p)$
\STATE Compute $Q_p,K_p,V_p$ with Algorithm~\ref{alg:projection}
\STATE Append full $K_p,V_p$ to the cache
\FOR{each allowed cached position $u\le p$}
  \STATE $j\gets\tau(u)$
  \STATE Compute $s_{pu}$ using~\eqref{eq:attention-raw-score}--\eqref{eq:attention-score}
\ENDFOR
\STATE $P_p\gets\softmax(s_p)$; $O_p\gets\sum_{u\le p}P_{pu}V_u$
\STATE Apply the typed output and feed-forward projections
\STATE \textbf{return} updated hidden row and KV cache
\end{algorithmic}
\end{algorithm}

The two algorithms use three properties of the construction.  Each projection acts on one token row; the type $\tau(p)$ is determined by the position; and a pair score depends only on the current query, one key, and their two types.  Hence a decoder computes each $K_u,V_u$ pair once and reuses it for every future query.  The triangular mask determines the temporal dependency, while the algebra selects feature groups inside each admitted pair.

\subsection{Training}

For one attention head, let $d_h=qm$ and split $Q_p,K_u\in\R^{d_h}$ into $q$ groups of width $m$.  For types $i=\tau(p)$ and $j=\tau(u)$, define
\begin{equation}
F(i,j)=
\begin{cases}
\{i\},&i=j,\\
\{i,j\},&i\neq j,
\end{cases}
\label{eq:feature-set}
\end{equation}
The algebra determines the unnormalized score
\begin{equation}
r_{pu}=
\sum_{a\in F(i,j)}
\left\langle Q_p^{(a)},K_u^{(a)}\right\rangle .
\label{eq:attention-raw-score}
\end{equation}
Its support contains
\begin{equation}
k_{ij}=|F(i,j)|m
\label{eq:attention-support-size}
\end{equation}
scalar coordinate products: $m$ when $i=j$ and $2m$ otherwise.  To derive a default scale, suppose at initialization that the selected query and key coordinates are independent, centered, and have unit variance.  Then $\operatorname{Var}(r_{pu})=k_{ij}$.  The variance-matched score is therefore
\begin{equation}
s_{pu}=\frac{r_{pu}}{\sqrt{k_{ij}}}.
\label{eq:attention-score}
\end{equation}
Attention applies this scaling to logits before softmax.  In ordinary dense attention the score uses all $d_h$ coordinates, so $k_{ij}=d_h$ and~\eqref{eq:attention-score} reduces to the standard $1/\sqrt{d_h}$ rule.  The algebraic score uses only $m$ or $2m$ coordinates, giving $1/\sqrt m$ or $1/\sqrt{2m}$.  Before normalization,~\eqref{eq:attention-raw-score} is precisely the corresponding block of $Q\star_qK^\top$:
\[
S_{ii}=Q_{ii}K_{ii}^{\top},\qquad
S_{ij}=Q_{ii}K_{ji}^{\top}+Q_{ij}K_{jj}^{\top}\quad(i\neq j).
\]
For $\cQ_2^{\otimes2}$, types are bit strings $\alpha,\delta\in\{0,1\}^2$ and the selected groups are
\begin{equation}
F_{\otimes}(\alpha,\delta)
=\{\beta:\beta_k\in\{\alpha_k,\delta_k\},\ k=1,2\}.
\label{eq:tensor-square-feature-set}
\end{equation}
A pair at Hamming distance $h(\alpha,\delta)$ uses $2^h$ groups, hence $k_{\alpha\delta}=2^hm$ and scale $1/\sqrt{2^hm}$.

We use the support-size normalization in~\eqref{eq:attention-score}.  Algebra groups lie within each attention head, and every rotary-embedding pair stays within one group.

The temporal mask and value aggregation are
\begin{align}
\widetilde s_{pu}&=
\begin{cases}s_{pu},&u\le p,\\-\infty,&u>p,\end{cases}\label{eq:causal-mask}\\
P_{pu}&=\softmax_{u}(\widetilde s_{pu}),\qquad
O_p=\sum_{u\le p}P_{pu}V_u.
\label{eq:softmax-pv}
\end{align}
The first design uses the algebraic score together with dense $PV$.

\begin{proposition}[Causality]
Suppose the hidden state at position $p$ depends only on input positions at most $p$.  Algebraic projections, the score in~\eqref{eq:attention-score}, and the masked aggregation in~\eqref{eq:causal-mask}--\eqref{eq:softmax-pv} preserve this property.
\label{prop:attention-causality}
\end{proposition}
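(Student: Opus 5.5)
We argue by induction over the sequence of operations inside one decoder block, and then over blocks, in the same way as \cref{prop:causality}. Fix a position $p$ and call a quantity \emph{$p$-admissible} if it is a function only of the input tokens at positions $u\le p$, together with fixed weights and the fixed schedule $\tau$. By hypothesis the hidden row $h_p$ is $p$-admissible for every $p$. We show that each stage of Algorithm~\ref{alg:attention-training} maps admissible rows to admissible rows.

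\textbf{Projections.} By Algorithm~\ref{alg:projection}, $Q_p,K_p,V_p$ are computed from the single row $h_p$, the fixed weights, and the type $\tau(p)$. The type $\tau(p)$ is a deterministic function of the absolute position, not of any token content. Hence $Q_p,K_p,V_p$ are $p$-admissible, and consequently $K_u,V_u$ are $u$-admissible for every $u$.

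\textbf{Scores, mask and aggregation.} For a pair $(p,u)$, the raw score in~\eqref{eq:attention-raw-score} depends only on $Q_p$, on $K_u$, and on the index set $F(\tau(p),\tau(u))$, which is fixed by the schedule. The normalization $1/\sqrt{k_{ij}}$ in~\eqref{eq:attention-score} also depends only on the two types. So whenever $u\le p$, the score $s_{pu}$ is $p$-admissible, because $u$-admissible implies $p$-admissible. The mask~\eqref{eq:causal-mask} replaces every $s_{pu}$ with $u>p$ by the constant $-\infty$, so the whole row $\widetilde s_{p\cdot}$ is $p$-admissible. The softmax in~\eqref{eq:softmax-pv} is a function of this row alone, and the masked entries contribute zero weight. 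Therefore $P_{pu}$ is $p$-admissible, and so is $O_p=\sum_{u\le p}P_{pu}V_u$, since each $V_u$ with $u\le p$ is admissible.

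\textbf{Remaining layers.} The typed output projection and the feed-forward projections are again row-local with type $\tau(p)$. Residual connections, normalization and pointwise gates act within the row. So the block's output at $p$ is $p$-admissible, and induction over blocks completes the argument.

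The only step needing care is the scores: we must check that the type-dependent feature set and scale introduce no dependence on other positions' content. This holds because $\tau$ is a content-free schedule and $k_{ij}$ depends only on the types. The same argument covers $F_{\otimes}$ in~\eqref{eq:tensor-square-feature-set}.
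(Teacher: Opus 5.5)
Your proof is correct and follows the same route as the paper's. Both arguments rest on the typed projections being row-local with a content-free schedule $\tau$. They then use that the mask admits only $u\le p$, while the algebraic score (including its type-dependent scale $1/\sqrt{k_{ij}}$) only selects feature groups within an admitted pair. You additionally write out the softmax, residual and normalization steps and the induction over blocks, which the paper's short proof leaves implicit (cf.\ \cref{prop:causality}).
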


\begin{proof}
Equations~\eqref{eq:typed-diag}--\eqref{eq:typed-off} act independently on each token row.  At query position $p$, the mask admits keys and values at $u\le p$.  The product selects feature groups within an admitted pair and therefore preserves the temporal dependency graph.
\end{proof}

For projections, $dX$ and $dW$ repeat the forward block support, giving the three-pass count in~\eqref{eq:three-pass-projection-cost}.  For equal groups the resulting projection fractions are $\rho_2=3/4$, $\rho_4=7/16$, and $9/16$ for the recursive $\cQ_2^{\otimes2}$ construction.

For $\cQ_q$ with equal group widths, let $M$ be the number of allowed query--key pairs and $M_{=}$ the number with equal endpoint types.  The exact score fraction is
\begin{equation}
\rho_{\rm score}=\frac{2M-M_{=}}{qM}.
\label{eq:attention-exact-ratio}
\end{equation}
For balanced long sequences, $\rho_{\rm score}\to\rho_q$.  Counting only the attention contractions, structured $QK^\top$ with dense $PV$ has the arithmetic speedup ceilings
\begin{equation}
S_{\rm attn,fwd}=\frac{2}{1+\rho_{\rm score}},\qquad
S_{\rm attn,train}=\frac{7}{3+4\rho_{\rm score}},
\label{eq:attention-ceilings}
\end{equation}
where the training count uses score recomputation in the backward pass and includes $dQ$ and $dK$.
Forward plus backward then contains four structured score contractions---forward $QK^\top$, score recomputation, $dQ$, and $dK$---and three dense value contractions---$PV$, $dP$, and $dV$.

\begin{table}[t]
\centering
\caption{Balanced-sequence arithmetic ceilings for structured $QK^\top$ and dense $PV$.}
\label{tab:attention-ceilings}
\small
\begin{tabular}{lccc}
\toprule
Law & $\rho$ & Forward & Training\\
\midrule
$\cQ_2^{\otimes2}$ & $9/16$ & $1.280\times$ & $1.333\times$\\
$\cQ_4$ & $7/16$ & $1.391\times$ & $1.474\times$\\
\bottomrule
\end{tabular}
\end{table}

Let $\mathcal P$ be the learned projections in one decoder block.  If projection $\ell$ sees $B_{\ell i}$ rows of type $i$ and $B_\ell=\sum_iB_{\ell i}$, set
\begin{align}
C_{\rm proj,dense}^{\rm fwd}
&=\sum_{\ell\in\mathcal P}B_\ell d_\ell^{\rm in}d_\ell^{\rm out},\nonumber\\
C_{\rm proj,alg}^{\rm fwd}
&=\sum_{\ell\in\mathcal P}\left[
B_\ell D_{{\rm diag},\ell}
+\sum_iB_{\ell i}s_{\ell i}(d_\ell^{\rm out}-r_{\ell i})\right].
\label{eq:projection-list-cost}
\end{align}
Under the standard MHA/GQA convention with aggregate query width $H_q=n_qd_h$ and $M$ allowed attention pairs, the projection and attention-contraction forward MAC counts are
\begin{align}
C_{\rm dense}^{\rm fwd}&=C_{\rm proj,dense}^{\rm fwd}+2MH_q,\label{eq:dense-block-cost}\\
C_{\rm alg}^{\rm fwd}&=C_{\rm proj,alg}^{\rm fwd}+(1+\rho_{\rm score})MH_q,
\label{eq:alg-block-cost}
\end{align}
and forward plus backward gives
\begin{align}
C_{\rm dense}^{\rm train}&\simeq3C_{\rm proj,dense}^{\rm fwd}+7MH_q,\label{eq:dense-train-cost}\\
C_{\rm alg}^{\rm train}&\simeq3C_{\rm proj,alg}^{\rm fwd}+(3+4\rho_{\rm score})MH_q.
\label{eq:alg-train-cost}
\end{align}
GQA changes the $K,V$ projection widths and cache storage through $H_{kv}$; every query head still performs its own score and value contraction, which gives the $H_q$ factor above.

\subsection{Inference}

Prompt prefill uses the parallel computation above and initializes the cache.  At each layer and for each sequence, the cache after position $p$ contains
\begin{equation}
K^{\rm cache},V^{\rm cache}
\in\R^{N_p\times n_{kv}\times d_h},
\label{eq:kv-cache-shape}
\end{equation}
where $N_p$ is the number of retained positions.  Thus the storage is $2N_pH_{kv}$ scalars, exactly as in the dense Transformer.  Keys and values keep all $d_h$ coordinates; the type of entry $u$ is recovered as $\tau(u)$ from its position.

Algorithm~\ref{alg:attention-decoding} gives the one-row update. Under GQA, each query head uses its assigned key/value head.  Each cached key and value is computed once and reused by every later position.  Different future query types may select different groups of the same cached key, which is why the cache stores the complete key vector.  Past queries and hidden rows do not enter the update.

For example, with $q=4$, a type-$3$ query reads group $3$ from a cached type-$3$ key and groups $\{3,1\}$ from a cached type-$1$ key.  A later type-$2$ query reads groups $\{2,1\}$ from that same type-$1$ key.  The full cached key supports both queries without recomputation.

Writing $D_{{\rm diag},\ell}=\sum_a s_{\ell,a}r_{\ell,a}$, applying~\eqref{eq:unequal-row-cost} to every projection $\ell\in\mathcal P$ gives
\begin{align}
C_{\rm proj,alg}^{(i),\rm dec}
&=\sum_{\ell\in\mathcal P}
\left[D_{{\rm diag},\ell}
+s_{\ell,i}\bigl(d_\ell^{\rm out}-r_{\ell,i}\bigr)\right],\\
C_{\rm proj,dense}^{\rm dec}
&=\sum_{\ell\in\mathcal P}d_\ell^{\rm in}d_\ell^{\rm out}.
\label{eq:decode-projection-total}
\end{align}
Unequal group widths make the algebraic cost position-type dependent.

For a standard decoder block, let $D$ be the model width, $F$ the MLP width, and $g=1$ for a two-projection MLP or $g=2$ for a gated MLP with two $D\to F$ maps.  The dense one-token projection cost is
\begin{equation}
C_{\rm proj,dense}^{\rm dec}
=2DH_q+2DH_{kv}+(g+1)DF.
\label{eq:decode-projection-expanded}
\end{equation}
With equal algebra groups and all these projections replaced, every token type has the same cost
\begin{equation}
C_{\rm proj,alg}^{\rm dec}=\rho_q C_{\rm proj,dense}^{\rm dec},
\qquad \rho_q=\frac{2q-1}{q^2}.
\label{eq:decode-projection-equal}
\end{equation}

After appending $K_p$ and $V_p$, let the cache contain $N_p$ allowed keys, with $n_j$ keys of type $j$ and $\sum_jn_j=N_p$.  A type-$i$ query head uses one group for its $n_i$ matching keys and two groups for the others.  Its exact score cost is
\begin{equation}
C_{QK,{\rm head}}^{(i)}=m(2N_p-n_i).
\label{eq:decode-score-one-head}
\end{equation}
Across $n_q$ query heads,
\begin{equation}
C_{QK,{\rm alg}}^{(i)}=\frac{H_q}{q}(2N_p-n_i),
\qquad
\rho_{{\rm score},{\rm dec}}^{(i)}=\frac{2N_p-n_i}{qN_p}.
\label{eq:decode-score-ratio}
\end{equation}
Dense value aggregation costs $N_pH_q$.  The one-token projection and attention-contraction MAC counts are therefore
\begin{align}
C_{\rm dense}^{(i),\rm dec}
&=C_{\rm proj,dense}^{\rm dec}+2N_pH_q,\\
C_{\rm alg}^{(i),\rm dec}
&=C_{\rm proj,alg}^{(i),\rm dec}+
\left(1+\rho_{{\rm score},{\rm dec}}^{(i)}\right)N_pH_q.
\label{eq:decode-block-total}
\end{align}

\begin{table}[t]
\centering
\caption{One-token decode at one layer.  Projection entries assume equal algebra groups; attention entries hold for the current cache composition.}
\label{tab:decode-summary}
\small
\begin{tabular}{lcc}
\toprule
Quantity & Dense & $\cQ_q$\\
\midrule
KV cache scalars & $2N_pH_{kv}$ & $2N_pH_{kv}$\\
Projection MACs & $C_{\rm proj,dense}^{\rm dec}$ & $\rho_qC_{\rm proj,dense}^{\rm dec}$\\
Score MACs & $N_pH_q$ & $\rho_{{\rm score},{\rm dec}}^{(i)}N_pH_q$\\
Value MACs & $N_pH_q$ & $N_pH_q$\\
\bottomrule
\end{tabular}
\end{table}

For a balanced cache, $n_i=N_p/q$ and $\rho_{{\rm score},{\rm dec}}^{(i)}=\rho_q$.  For example, $\cQ_4$ reduces the score contraction to $7N_pH_q/16$.  Dense value aggregation still costs $N_pH_q$, so attention as a whole costs $23/32$ of dense attention, an arithmetic speedup of $32/23\approx1.39$.  Decode time remains linear in $N_pH_q$ and cache storage remains linear in $N_pH_{kv}$; the changed product improves the constants in the projections and score contraction, while dense value aggregation sets the attention floor.

For $\cQ_2^{\otimes2}$, a query type $\alpha$ has exact per-head score cost
\begin{equation}
C_{QK,{\rm head}}^{(\alpha)}
=m\sum_{\delta\in\{0,1\}^2}2^{h(\alpha,\delta)}n_\delta.
\label{eq:decode-score-tensor-square}
\end{equation}
A balanced cache gives score fraction $9/16$ and attention-only speedup $1.28$.

Decode exposes skinny GEMMs or GEMVs and can be limited by weight and cache bandwidth.  Under GQA, the cache stores $2N_pH_{kv}$ key/value scalars, while the arithmetic scales with $H_q$ because every query head has its own scores and probabilities.  A lower arithmetic count therefore need not produce the same relative speedup as in training.  Timing this extension requires per-token latency and throughput measurements across batch sizes and cache lengths.

\section{Graph-Algebra and Scaling Proofs}
\label{app:graph-proofs}

\subsection{Graph Algebra}

\begin{proof}[Proof of \cref{prop:structure}]
A triple of basis elements with at least two edges vanishes under either
bracketing. With one edge, both bracketings retain that edge exactly when
the adjacent vertices match its source and target. With no edges, both
require matching vertices. Bilinearity proves associativity. The sum
$\sum_i e_i$ acts as an identity on every basis element. Multiplying an
edge from either side returns an edge or zero, so its span is a two-sided
ideal and $J^2=0$. Removing that span leaves the independent products
$e_i e_i=e_i$, giving $\cQ_G/J\cong\F^{|V|}$.
\end{proof}

\begin{proof}[Proof of \cref{thm:graph-rank}]
Equations~\eqref{eq:graph-product-vertex}--\eqref{eq:graph-product-edge}
give the upper bound $R(\cQ_G)\le |V|+2|E|$. Every maximal two-sided ideal
contains $J$: its image in the semisimple quotient cannot contain a
nonzero nilpotent ideal. Maximal ideals therefore correspond to the
$|V|$ maximal ideals of $\F^{|V|}$, so $t(\cQ_G)=|V|$. The
Alder--Strassen bound then gives
\[
R(\cQ_G)\ge2(|V|+|E|)-|V|=|V|+2|E|,
\]
which matches the upper bound.
\end{proof}

\subsection{Quadratic Scaling}

\label{app:scaling-proof}
\begin{proof}[Proof of \cref{prop:quadratic-scaling}]
There are $2q^2-q$ ordinary block products, each costing $b^3$ MACs.
Since $n=qb$,
\[
(2q^2-q)b^3
=\left(\frac2q-\frac1{q^2}\right)n^3
=2bn^2-b^2n.
\]
The edge ideal tensored with $M_b(\F)$ is nilpotent, while the quotient is
a direct product of $q$ copies of $M_b(\F)$. Hence $t(\mathcal A_{q,b})=q$,
and Alder--Strassen gives
\[
R(\mathcal A_{q,b})
\ge2n^2-q.
\]
\end{proof}

\subsection{Causality}
\label{app:causality-proof}

\begin{proof}[Proof of \cref{prop:causality}]
The row action depends only on the features and type at the current
position. At position $p$, triangular attention admits keys and values only
at positions $u\le p$. Induction over layers therefore preserves dependence
only on positions at most $p$.
\end{proof}

\clearpage
\onecolumn
\section{Training Data Sources}
\label{app:training_dataset}

The training and evaluation data comprise a mixture of open-source datasets available on Hugging Face~\footnote{\url{https://huggingface.co/datasets}}.
\autoref{tab:ds_name_stats} reports the number of records imported from each source dataset.
The training split includes only the training splits of the original datasets.

\paragraph{Deduplication.}

To avoid unintended duplication from source datasets that combine other
datasets, we retain only the first occurrence of each \texttt{query}--\texttt{response}
pair, where duplicates are defined by an exact character-by-character match.

\paragraph{Decontamination and junk filtering.}
For each raw sample, the \texttt{query} field is checked for near-duplicates against a pool of benchmark queries from MBPP, HumanEval, MMLU, GSM8K, HellaSwag, and PIQA. Similarity is measured using Jaccard similarity over 3-character shingles, with MinHash + LSH used for efficient candidate generation followed by an exact Jaccard recheck. For samples with \texttt{domain == "coding"}, every \texttt{```python ... ```} fenced code snippet in the \texttt{response} field is additionally compared against MBPP reference solutions in the \texttt{code} field. This detects cases in which the query has been paraphrased or rewritten but the response remains close to a benchmark solution. Samples with Jaccard similarity above \texttt{DECONTAM\_JACCARD\_THRESHOLD=0.80} are removed.

Trivially empty, too-short, and degenerate samples are also removed using domain-specific checks: code-specific checks for \texttt{domain == "coding"} and a generic repetition check otherwise.

\small
\begin{longtable}{@{}p{0.80\textwidth}r@{}}
\caption{Imported records by dataset, as recorded for the training mixture.}
\label{tab:ds_name_stats}\\
\toprule
Dataset & Records\\
\midrule
\endfirsthead
\toprule
Dataset (continued) & Records\\
\midrule
\endhead
\bottomrule
\endfoot
GSAI-ML/ReFusion & 2881860 \\
openbmb/UltraData-SFT-2605 & 2651052 \\
MBZUAI/LaMini-instruction & 2540109 \\
OLMo-Coding/starcoder-python-instruct & 1087321 \\
nvidia/Nemotron-Post-Training-Dataset-v2 & 1145725 \\
nvidia/Nemotron-RL-knowledge-mcqa & 611930 \\
nvidia/Nemotron-SFT-Instruction-Following-Chat-v2 & 604155 \\
ZeroAgency/ru-big-russian-dataset & 441537 \\
t-tech/T-Wix & 438000 \\
meta-math/MetaMathQA & 381136 \\
OpenCoder-LLM/opc-sft-stage2 & 280319 \\
jtatman/python-code-dataset-500k & 207994 \\
NTU-NLP-sg/xCodeEval & 204545 \\
d0rj/orca-math-word-problems-200k-ru & 197488 \\
Helsinki-NLP/opus-100 & 165138 \\
sayhan/strix-philosophy-qa & 133760 \\
open-r1/OpenThoughts-114k-math & 88843 \\
allenai/tulu-3-sft-personas-math-filtered & 80535 \\
openbmb/UltraInteract\_sft & 75781 \\
Post-training-Data-Flywheel/AutoIF-instruct-61k & 61345 \\
argilla/ifeval-like-data & 56054 \\
camel-ai/math & 49079 \\
MERA-evaluation/MERA & 48771 \\
bingbangboom/philosophia-QA & 47394 \\
allenai/tulu-3-sft-personas-math-grade-filtered & 47042 \\
MuskumPillerum/General-Knowledge & 34877 \\
Vikhrmodels/GrandMaster-PRO-MAX & 29123 \\
AITISPEC/physics-russian & 19780 \\
attn-signs/russian-code & 19148 \\
MexIvanov/CodeExercise-Python-27k-ru & 18923 \\
ajibawa-2023/Python-Code-23k-ShareGPT & 17092 \\
teknium/OpenHermes-2.5 & 14649 \\
RushabhShah122000/python-expert-dataset & 12567 \\
qwedsacf/competition\_math & 11613 \\
microsoft/NextCoderDataset & 9319 \\
mizinovmv/ru\_ifeval-like-data & 3822 \\
attn-signs/russian-easy-instructions & 2101 \\
newfacade/LeetCodeDataset & 2082 \\
greengerong/leetcode & 2050 \\
jondurbin/airoboros-3.2 & 1710 \\
ise-uiuc/Magicoder-Evol-Instruct-110K & 656 \\
nvidia/Nemotron-SFT-ARC-AGI-v1 & 521 \\
LLiserginov/russian-instructions-10k & 178 \\
HuggingFaceH4/ifeval-like-data & 3 \\
\end{longtable}
\normalsize

\end{document}